\documentclass[a4paper,USenglish,cleveref]{lipics-v2021}
\pdfoutput=1
\hideLIPIcs
\bibliographystyle{plainurl}
\nolinenumbers

\usepackage{amsmath,amssymb,amsfonts}
\usepackage{graphicx}

\usepackage{amsthm}
\usepackage{booktabs}
\usepackage{xspace}
\usepackage{algorithm}
\usepackage[noend]{algpseudocode}
\usepackage[frozencache,cachedir=minted-cache]{minted}
\usepackage{hyperref}
\usepackage{framed}

\usepackage{tikz}
\usetikzlibrary{fit,positioning}

\newcommand{\tool}{\textsc{Bolt}\xspace}
\newcommand{\BoolSynth}{Boolean Set Cover\xspace}
\newcommand{\bigO}{\mathcal{O}}

\renewcommand{\phi}{\varphi}

\newcommand{\vfb}{VFB algorithm\xspace}

\newcommand{\LTLBSSwitch}{\ensuremath{\mathsf{LTL2BS\textup{-}switch}}\xspace}
\newcommand{\dcswitch}{\ensuremath{\mathsf{D\&C\textup{-}switch}}\xspace}
\newcommand{\beamwidth}{\ensuremath{\mathsf{beam\textup{-}width}}\xspace}

\newcommand{\bsSolver}{\textsc{BS-Solver}}
\newcommand{\DandC}{\textsc{DivConq}}

\newcommand{\LTL}{\ensuremath{\textbf{LTL}}\xspace}
\newcommand{\LTLf}{\ensuremath{\textbf{LTL}_f}\xspace}
\newcommand{\AP}{\mathsf{AP}}

\newcommand{\lX}{\mathop{\textbf{X!}}}
\newcommand{\lXweak}{\mathop{\textbf{X}}}
\newcommand{\lG}{\mathop{\textbf{G}}}
\newcommand{\lF}{\mathop{\textbf{F}}}
\newcommand{\lU}{\mathbin{\textbf{U}}}
\newcommand{\lR}{\mathbin{\textbf{R}}}

\newcommand{\set}[1]{\left\{ #1 \right\}}

\newcommand{\suffix}[2]{#1[#2 \dots]}
\newcommand{\size}[1]{|#1|}

\newcommand{\CT}{\mathsf{CT}}

\newcommand{\bs}{\BoolSynth}
\newcommand{\bsSet}{F}
\newcommand{\family}{\mathcal{F}}
\newcommand{\bsFormula}{\theta}
\newcommand{\bsF}{\bsFormula}
\newcommand{\bsMem}{\mathcal{M}}
\newcommand{\bsTop}{\mathcal{T}}
\newcommand{\bsSize}[1]{\mathsf{weight}(#1)}

\usepackage{stmaryrd}
\newcommand{\bsEval}[1]{\llbracket#1\rrbracket}
\newcommand{\bsSat}[1]{\mathsf{sat}( #1 )}
\newcommand{\bsScore}{\mathsf{score}}
\newcommand{\bspq}{\mathsf{pq}}
\newcommand{\bsLim}{b}
\newcommand{\domin}{\preceq}

\newcommand{\op}{\mathsf{op}}

\title{\texorpdfstring{LTL$_f$}{LTLf} Learning Meets Boolean Set Cover}

\author{Gabriel Bathie}{LaBRI, Université de Bordeaux, France \and DIENS, Paris, France}{}{https://orcid.org/0000-0003-2400-4914}{}
\author{Nathanaël Fijalkow}{LaBRI, CNRS, Université de Bordeaux, France}{}{https://orcid.org/0000-0002-6576-4680}{}
\author{Théo Matricon}{LaBRI, Université de Bordeaux, France \and Univ Rennes, CNRS, Inria, IRISA, Rennes, France}{}{https://orcid.org/0000-0002-5043-3221}{}
\author{Baptiste Mouillon}{LaBRI, Université de Bordeaux, France}{}{https://orcid.org/0000-0002-3248-5714}{}
\author{Pierre Vandenhove}{LaBRI, Université de Bordeaux, France \and UMONS -- Université de Mons, Belgium}{}{https://orcid.org/0000-0001-5834-1068}{}

\authorrunning{G. Bathie, N. Fijalkow, T. Matricon, B. Mouillon, P. Vandenhove}
\Copyright{Gabriel Bathie, Nathanaël Fijalkow, Théo Matricon, Baptiste Mouillon, Pierre Vandenhove}

\ccsdesc[500]{Theory of computation~Modal and temporal logics}
\ccsdesc[300]{Computing methodologies~Knowledge representation and reasoning}

\keywords{Linear Temporal Logic, Finite Traces, Specification Mining, \bs}

\newcommand{\artifactLink}{\url{https://doi.org/10.5281/zenodo.18175020}}
\newcommand{\boltLink}{\url{https://github.com/SynthesisLab/Bolt}}
\newcommand{\benchmarkLink}{\url{https://github.com/SynthesisLab/LTLf_Learning_Benchmarks}}
\supplement{
    \tool's code is available at \boltLink.
    The $\LTLf$-learning benchmarks used for the evaluation of our tool are available at \benchmarkLink.
    The artifact for this paper, containing both the above and code to reproduce the experiments, is available at \artifactLink.
}

\funding{This work was partially supported by the SAIF project, funded by the ``France 2030'' government investment plan managed by the French National Research Agency, under the reference ANR-23-PEIA-0006.
Pierre Vandenhove was funded by ANR project G4S (ANR-21-CE48-0010-01).
Experiments presented in this paper were carried out using the Grid'5000 testbed, supported by a scientific interest group hosted by Inria and including CNRS, RENATER and several Universities as well as other organizations (see \url{https://www.grid5000.fr/}).}

\begin{document}

\maketitle

\begin{abstract}
    Learning formulas in Linear Temporal Logic ($\LTLf$) from finite traces is a fundamental research problem which has found applications in artificial intelligence, software engineering, programming languages, formal methods, control of cyber-physical systems, and robotics. We implement a new CPU tool called \tool improving over the state of the art by learning formulas more than $100$x faster over $70$\% of the benchmarks, with smaller or equal formulas in $98$\% of the cases. Our key insight is to leverage a problem called \emph{\bs} as a subroutine to combine existing formulas using Boolean connectives. Thanks to the \bs component, our approach offers a novel trade-off between efficiency and formula size.
\end{abstract}

\newpage
\section{Introduction} \label{sec:intro}
Linear Temporal Logic (\LTL)~\cite{Pnueli:1977} is a prominent logic for specifying temporal properties over infinite traces; in this paper, we consider \LTL on finite traces~\cite{Giacomo.Vardi:2013}, abbreviated $\LTLf$.
The fundamental problem we study is to learn $\LTLf$ formulas from traces: given a set of positive and negative traces, find an $\LTLf$ formula separating positive from negative ones.
$\LTLf$ learning spans different research communities, each contributing applications, approaches, and viewpoints on this problem.
We give below a cursory cross-sectional survey of motivations and applications of $\LTLf$ learning in three communities.

\subparagraph*{Software Engineering, Programming Languages, Formal Methods.}
$\LTLf$~learning is an instantiation of specification mining, which is an active area of research devoted to discovering formal specifications of code.
Already back in the 1970s, Wegbreit~\cite{Wegbreit:1974} and Caplain~\cite{Caplain:1975} propose frameworks to automatically generate properties of code.
At this point, it is important to distinguish between the dynamic and the static setting: in this paper, we are interested in the \emph{dynamic} setting where we observe program executions (also called \emph{traces}) to infer properties of the code.
The term \emph{specification mining} was coined by Ammons, Bod\'ik, and Larus~\cite{Ammons.Bodk.ea:2002} in a seminal paper where finite-state machines capture both temporal and data dependencies.
Zeller~\cite{Zeller:2010} contributed a roadmap to mining specifications in 2010, highlighting its potential for software engineering.
A few years later, Rozier~\cite{Rozier:2016} shaped an entire research program revolving around \emph{\LTL Genesis}, motivating and introducing the \LTL learning problem as we study it here.
Early works focused on mining simple temporal properties~\cite{Engler.Chen.ea:2001}.
In particular, Perracotta~\cite{Yang.Evans.ea:2006} and Javert~\cite{Gabel.Su:2008} focus on patterns of the form~$(a b)^*$ and $(a b^* c)^*$.
The first tool supporting all $\LTLf$ formulas is called Texada, and was created in 2015~\cite{Lemieux.Park.ea:2015,Lemieux.Beschastnikh:2015}.
Following Texada, a line of work focuses on scaling $\LTLf$ learning to industrial sizes, conjuring different approaches. We mention here the state-of-the-art $\LTLf$~learning tools: Scarlet, based on combinatorial search~\cite{Raha.Roy.ea:2022}, and a GPU-accelerated algorithm (later referred to as \emph{\vfb} given its authors Valizadeh, Fijalkow, and Berger)~\cite{Valizadeh.Fijalkow.ea:2024}.
Applications of mining specifications for software include detecting malicious behaviors~\cite{Christodorescu.Jha.ea:2007} or violations~\cite{Li.Zhou:2005} and are already widely adopted in software engineering: for instance, the ARSENAL and ARSENAL2 projects~\cite{Ghosh.Elenius.ea:2016} have been successful in constructing $\LTLf$ formulas inferred from natural language requirements, and the FRET project generates $\LTLf$ from trace descriptions~\cite{Giannakopoulou.Pressburger.ea:2020}.
We refer to the textbook~\cite{Lo.Khoo.ea:2017} and the PhD thesis of Li~\cite{Li:2013} for comprehensive presentation of specification mining for software, and its relationship to data mining.

\subparagraph*{Control of Cyber-Physical Systems and Robotics.}
$\LTLf$ learning is extensively studied in a second area of research with different goals and very different methods: to capture properties of trajectories in models and systems. Given its quantitative nature, Signal Temporal Logic (\textbf{STL}) is preferred over $\LTLf$ since it involves numerical constants and can thus capture real-valued and time-varying behaviors.
In particular, temporal logics allow reasoning about robustness~\cite{Bartocci.Bortolussi.ea:2015} and anomaly detection~\cite{Jones.Kong.ea:2014,Kong.Jones.ea:2017}.
There is a vast body of work on learning temporal logics in control and robotics, which can be divided into two: techniques aiming at fitting parameters of a fixed assumed \textbf{STL} formula~\cite{Asarin.Donze.ea:2012,Yang.Hoxha.ea:2012,Jin.Donze.ea:2015}, and approaches searching for both formulas and parameters~\cite{Jin.Donze.ea:2015,Bombara.Vasile.ea:2016}.
This led to many case studies: automobile transmission controller and engine airpath control~\cite{Yang.Hoxha.ea:2012},
assisted ventilation in intensive care patients~\cite{Bufo.Bartocci.ea:2014},
dynamics of a biological circadian oscillator and discriminating different types of cardiac malfunction from electro-cardiogram data~\cite{Bartocci.Bortolussi.ea:2014},
anomaly detection in a maritime environment~\cite{Bombara.Vasile.ea:2016},
demonstrations in robotics~\cite{Chou.Ozay.ea:2020},
detection of attention loss in pilots~\cite{Lyu.Li.ea:2024},
and analysis of computer games~\cite{Guti_rrez_S_nchez_2025}.

\subparagraph*{Artificial Intelligence.}
The field of AI has contributed to $\LTLf$ learning a number of applications, but also a wealth of techniques.
The overarching philosophy of $\LTLf$ learning in AI is that $\LTLf$ forms a natural and explainable formalism~\cite{Camacho.McIlraith:2019} for specifying objectives~\cite{Icarte.Klassen.ea:2018} in various machine learning contexts. For instance, in the context of reinforcement learning, $\LTLf$ formulas have been used to either guide~\cite{Camacho.Chen.ea:2017,Brafman.Giacomo.ea:2018} or constrain policies~\cite{Icarte.Klassen.ea:2018*1,Camacho.Icarte.ea:2019}. Another recent application aims to speed up synthesis by separating data and control~\cite{Murphy.Holzer.ea:2024}.

\subparagraph*{SOTA on $\LTLf$ Learning.}
Since learning temporal logics is computationally hard\footnote{It is NP-complete, even for restricted fragments of $\LTLf$~\cite{Fijalkow.Lagarde:2021,Mascle.Fijalkow.ea:2023}.}, different approaches have been explored.
The state-of-the-art tool is Scarlet, which is based on specifically tailored fragments of $\LTLf$~\cite{Raha.Roy.ea:2022,Raha.Roy.ea:2024}.
A GPU-accelerated algorithm was published in~2024~\cite{Valizadeh.Fijalkow.ea:2024}, yielding an improved state of the art in a different category.
Scarlet cannot reliably learn formulas of size greater than~$15$, which is less than ideal: we aim to change this.

\subparagraph*{\bs.}
Our key insight in this work, and what we believe is the key to scaling $\LTLf$ learning to industrial sizes, is to treat Boolean operators differently from temporal operators. To address the former, we leverage a novel solution to a problem we call \emph{\BoolSynth}~\cite{Raha.Roy.ea:2022,Roy2025}.
When enumerating $\LTLf$ formulas, we quickly hit a wall because of the exponential number of formulas. Let us say that at this point, we have enumerated the formulas $\phi_1,\dots,\phi_k$ but have not found a solution. Instead of giving up, \bs looks for solutions in the form of Boolean combinations of the formulas $\phi_1,\dots,\phi_k$, allowing us to generate much larger and more expressive formulas. See Figure~\ref{fig:BS}.

\begin{figure*}[t]
	\centering
	\includegraphics[width=0.999\textwidth]{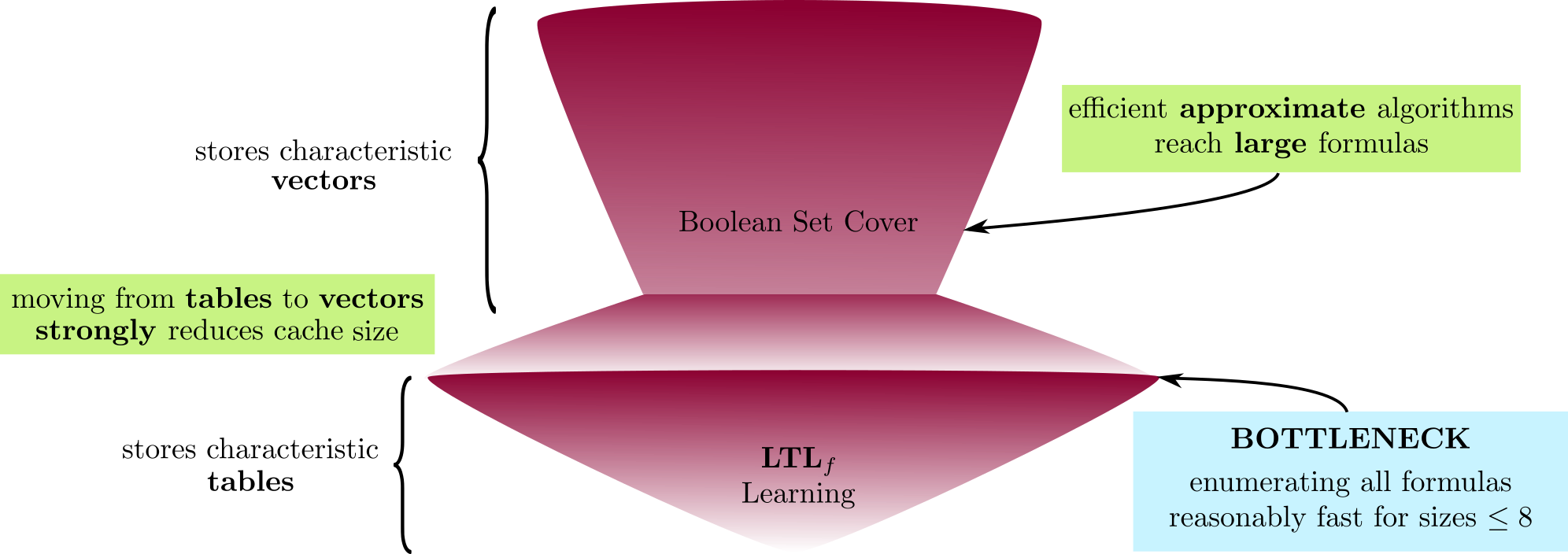}
	\caption{Overview of $\LTLf$ Learning with \bs}
	\label{fig:BS}
\end{figure*}

\bs is a fundamental problem which has been studied under different names. In the $80$s, when key concepts from learning theory emerged, it was known as \emph{Boolean Concept learning}~\cite{Valiant_1984,Angluin_1988} (see also the textbook~\cite{anthony1997computational}).
In logic, it has been extensively studied as \emph{extending partially defined Boolean functions}; see, e.g.,~\cite{Crama_1988}.
The name \emph{\BoolSynth} was introduced in~\cite{Raha.Roy.ea:2022} in the context of $\LTLf$ learning, highlighting that it extends the classical \emph{Set Cover} problem. For consistency in this line of work, we adhere to this terminology.

The \textbf{key finding} of this work is that the \BoolSynth problem can be solved \textit{in an approximate way} very efficiently, which can then be leveraged to learn $\LTLf$ formulas of size way beyond the reach of existing tools. An overview of our general approach is depicted in Figure~\ref{fig:BS}.

\subparagraph*{Our Contributions:}
\begin{itemize}
    \item We propose a framework for combining $\LTLf$ learning with \bs.
    \item We implement our algorithms in a new tool called $\tool$.
    \item We consolidate a benchmark suite for $\LTLf$ learning with over $15,\!000$ tasks.
	\item We show through experiments that $\tool$ significantly improves over the state of the art, both in terms of wall-clock time and size of learned formulas.
\end{itemize}

We believe \BoolSynth is a fundamental problem of independent interest, which has a lot of potential applications beyond $\LTLf$ learning. We leave as future work to explore them; we discuss perspectives in Section~\ref{sec:perspectives}.

\subparagraph*{Code Availability.}
An artifact with all the code, dataset, and additional scripts to reproduce our experiments is available at \artifactLink.
\tool's code is available at \boltLink.
The benchmarks used for the evaluation of our tool are available at \benchmarkLink.

\subparagraph*{Conference Version.}
This article extends a conference version~\cite{conferenceVersion} in the proceedings of TACAS~2026 with the complete proofs and additional examples and details.

\section{\texorpdfstring{$\LTLf$}{LTLf} Learning} \label{sec:ltl_learning}
\subsection{Problem Definition}

One of the reasons for the success of \emph{Linear Temporal Logic} ($\LTLf$) as a logical formalism for temporal reasoning on traces is that its semantics can be conveyed using a single picture:

\begin{center}
\includegraphics[width=0.65\textwidth]{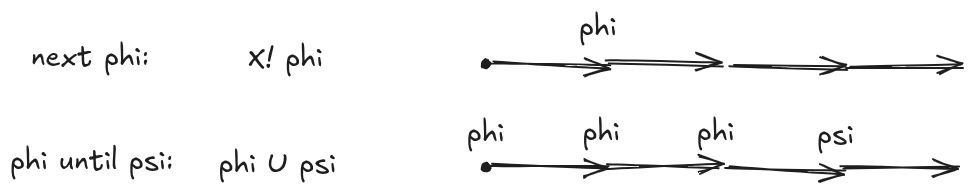}
\end{center}

The syntax of $\LTLf$ is particularly simple (no variables and no quantifiers), it uses only the classical \textit{Boolean operators} and two \textit{temporal operators}: $\lX$, called \textit{Strong Next}, and $\lU$, called \textit{Until}.\footnote{The temporal operators $\lF$ and $\lG$ will be derived from~$\lU$.}
Informally, $\lX \phi$ holds if $\phi$ holds starting from the next position,
and $\phi \lU \psi$ holds if there exists a later position such that $\psi$ holds, and $\phi$ holds in the meantime.

Formally, let us fix a finite set of \emph{atomic propositions} $\AP$.
A (finite) trace is a (non-empty) sequence where each position holds a subset of atomic propositions; for instance, with~$\AP = \{p, q\}$:
\begin{center}
\includegraphics[width=0.6\textwidth]{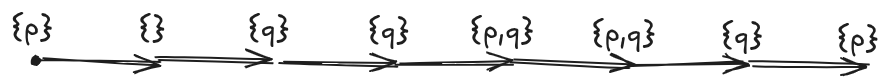}
\end{center}

We index traces from position $1$ (not $0$) and the letter at position $i$ in the trace $w$ is written $w(i)$,
so $w = w(1) \dots w(\ell)$ where $\ell$ is the length of $w$, written $|w| = \ell$.
We write $\suffix{w}{k} = w(k) \dots w(\ell)$.
We only consider non-empty traces.

The syntax of \emph{Linear Temporal Logic} ($\LTLf$) includes atomic propositions $c \in \AP$ as well as $\top$ and $\bot$, the Boolean operators $\lnot$, $\wedge$, and $\vee$, and the \emph{temporal operators} $\lX$ and~$\lU$.
The semantics of $\LTLf$ are defined inductively, through the notation $w \models \phi$ where $w$ is a non-empty trace and $\phi$ is an $\LTLf$ formula.
The definition is given below for the atomic propositions and temporal operators $\lX$ and $\lU$, with Boolean operators interpreted as usual.
For $w$ of length~$\ell$:
\begin{itemize}
	\item $w \models \top$ and $w \not\models \bot$.
    \item $w \models c$, with $c\in\AP$, if $c\in w(1)$.
    \item $w \models \lX \phi$ if $\ell > 1$ and $\suffix{w}{2} \models \phi$ (\emph{strong next}).
    \item $w \models \phi \lU \psi$ if there is $i \in [1,\ell]$ such that for $j \in [1,i-1]$ we have $\suffix{w}{j} \models \phi$,
    and $\suffix{w}{i} \models \psi$ (\emph{until}).
\end{itemize}
We say that \emph{$w$ satisfies~$\phi$} if $w \models \phi$.
We define the \emph{size} of an $\LTLf$ formula $\phi$ as the number of nodes in its syntactic tree.

As customary, we include some additional temporal operators:
\begin{itemize}
    \item $\lXweak \phi$ is ``weak next~$\phi$'', it is a shortcut for $\lnot \lX \lnot \phi$;
    \item $\lF \phi$ is ``eventually~$\phi$'', it is a shortcut for $\top \lU \phi$;
    \item $\lG \phi$ is ``globally~$\phi$'', it is a shortcut for $\lnot \lF \lnot \phi$;
    \item $\phi \lR \psi$ is ``release~$\phi$'', it is a shortcut for $\lnot \psi \lU \lnot \phi$.
\end{itemize}

The $\LTLf$ learning problem is defined as follows:
\begin{framed}
\centering
\begin{tabular}{p{1.9cm}p{10cm}}
\normalsize\textbf{INPUT}:& \normalsize two disjoint finite sets of traces $P$ and $N$ \\
\normalsize\textbf{OUTPUT}:& \normalsize an $\LTLf$ formula $\phi$ of minimal size such that all of $P$ satisfies $\phi$ and none of $N$ satisfies $\phi$
\end{tabular}
\end{framed}

Note that the minimality of a formula depends on the set of operators considered (both Boolean and temporal).
The set of operators we consider is standard for $\LTLf$.

\subsection{The VFB Algorithm} \label{secsub:ltl_implem}
Our first baseline for $\LTLf$ learning is an optimized enumerative algorithm from Valizadeh, Fijalkow, and Berger~\cite{Valizadeh.Fijalkow.ea:2024}; we call it the \emph{\vfb}. Simply put:
\begin{center}
\vfb = combinatorial search + fast evaluation + observational equivalence.
\end{center}

\subparagraph*{Combinatorial Search.}
The core algorithm is a bottom-up enumeration of $\LTLf$ formulas by size.
First, the set of formulas of size 1 contains all atomic propositions.
At step~$k$, we generate all formulas of size~$k + 1$.
In order to do this, for each operator, we take arguments such that the sum of sizes of arguments is equal to $k$; this way, the generated formula will have size $k+1$ by combining it with the selected operator.
A~property of this algorithm is that it is agnostic to the operators: it can be adapted to any fragment or be augmented with other operators, as long as we can easily compute inductively whether a trace satisfies a formula.

\subparagraph*{Fast Evaluation.}
The key idea is to represent the semantics of $\LTLf$ formulas on input traces using so-called ``characteristic tables'':
\begin{itemize}
    \item The \emph{characteristic sequence (CS)} of a formula $\phi$ on a trace $w$ is the bit vector $v\in\set{0, 1}^{\size{w}}$ such that $w[i\ldots] \models \phi$ if and only if $v(i)=1$;
    \item The \emph{characteristic table (CT)} of a formula $\phi$ over the positive traces $P$ and the negative traces~$N$ is a sequence~$t$ such that $t(i)$ is the CS of $\phi$ on the $i$\textsuperscript{th} trace of $P \cup N$.
\end{itemize}
We give an example of characteristic sequences and tables in Figure~\ref{fig:charseq}.

\definecolor{green}{rgb}{0,0.6,0}
\definecolor{red}{rgb}{.8,0,0}
\definecolor{blue}{rgb}{0,0,0.8}
\begin{figure}[th]
    \centering
    \begin{tikzpicture}[scale=1.1]
        \node (p1) at (0, 0) {$aabaa$};
        \node (p2) at (0, -.6) {$baaa$};
        \node (n3) at (0, -1.2) {$abab$};
        \node (n4) at (0, -1.8) {$aab$};

        \node[right of=p1,node distance=1.2cm] (11) {$1$};
        \node[right of=p2,node distance=1.2cm] (12) {$1$};
        \node[right of=n3,node distance=1.2cm] (13) {$0$};
        \node[right of=n4,node distance=1.2cm] (14) {$1$};

        \node[right of=11,node distance=.7cm] (21) {$0$};
        \node[right of=12,node distance=.7cm] (22) {$1$};
        \node[right of=13,node distance=.7cm] (23) {$1$};
        \node[right of=14,node distance=.7cm] (24) {$0$};

        \node[right of=21,node distance=.7cm] (31) {$1$};
        \node[right of=22,node distance=.7cm] (32) {$1$};
        \node[right of=23,node distance=.7cm] (33) {$0$};
        \node[right of=24,node distance=.7cm] (34) {$0$};

        \node[right of=31,node distance=.7cm] (41) {$1$};
        \node[right of=32,node distance=.7cm] (42) {$0$};
        \node[right of=33,node distance=.7cm] (43) {$0$};

        \node[right of=41,node distance=.7cm] (51) {$0$};

		\node[draw, green, thick, fill=green!30, fill opacity=.4, rounded corners=8pt, inner sep=3pt, fit=(11)(14)] {};
        \node[below of=14,node distance=.7cm] {\color{green}Characteristic vector};

        \node[draw, blue, thick, fill=blue!30, fill opacity=.4, rounded corners=8pt, inner sep=4.5pt, fit=(11)(51)] {};
        \node[above of=31,node distance=.7cm] {\color{blue}Characteristic sequence};

        \node[draw, red, thick, fill=red!30, fill opacity=.25, rounded corners=12pt, inner sep=7pt, fit=(11)(51)(14)(34), label=right:{\color{red}Characteristic table}] {};
    \end{tikzpicture}
    \caption{Characteristic \emph{sequences} and \emph{table} of $\phi = \lX a$ with $P = \{aabaa, baaa\}$ and $N = \{abab, aab\}$.
    We also use \emph{characteristic vector} for the vector of all first bits of the characteristic sequences.
    Looking at the characteristic vector suffices to check whether $\phi$ is a solution: it must have $1$'s at all positions of $P$ and $0$'s at all positions of $N$.
    However, we need to keep in memory the whole characteristic table to check for observational equivalence.}
    \label{fig:charseq}
\end{figure}
The characteristic table is not a rectangular matrix when traces in $P \cup N$ do not have the same length.

With these objects, one can compute inductively the characteristic sequences and tables of formulas with only a few bitwise operations.
For instance, the semantic of the strong next operator $\lX$ is simply a ``left shift'' on characteristic sequences.
This idea originates from~\cite{Valizadeh.Fijalkow.ea:2024}.
The brief Python code to compute efficiently the characteristic sequences of $\LTLf$ formulas is recalled in Appendix~\ref{app:pseudocodesLTL}.

\subparagraph*{Observational Equivalence.}
A second idea to tackle the combinatorial explosion of the number of formulas is to consider only ``semantically unique'' formulas.
\emph{Observational equivalence} is an idea from program synthesis~\cite{Gabbrielli.Levi.ea:1992}; in our context, two formulas are \emph{observationally equivalent} on $(P, N)$ if they generate the same characteristic table.
When enumerating formulas, we may keep a single representative for each equivalence class without losing completeness.

\subparagraph*{Scaling Issues.}
Despite these improvements to a simple exhaustive search, the scaling of the problem is enormous.
We identified two problems limiting the depth of algorithms.

The first issue, of course, is the exponential blow-up due to the number of increasingly large formulas.
The second issue is the memory used to store the evaluation of \LTLf formulas in the form of characteristic tables, which is necessary to check for observational equivalence.
The more input traces there are, the larger the characteristic tables are; the longer the traces are, the wider the tables are.
When tables have more elements, observational equivalence is also less likely to occur.

\section{\bs} \label{sec:bs}
Let us start with defining \bs formally. We consider a finite set $X$ partitioned into positive and negative elements: $X = P \cup N$. We are given a (potentially large) set of atomic formulas $\phi_1,\dots,\phi_k$ over $X$. The goal is to construct a formula separating $P$ from $N$ using conjunctions and disjunctions of the atomic formulas. 

As an example, consider the following 
instance of \BoolSynth (adapted from~\cite[Fig.~1]{Raha.Roy.ea:2022}):
\begin{center}
\includegraphics[width=0.35\columnwidth]{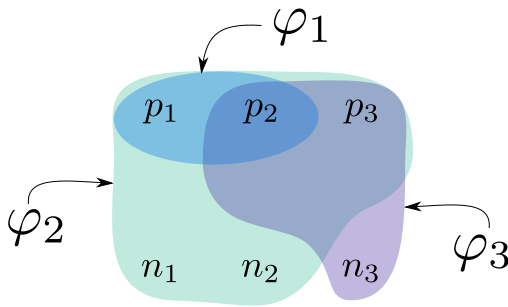}
\end{center}
Here, $P = \set{p_1,p_2,p_3}$ and $N = \set{n_1,n_2,n_3}$. The formulas $\varphi_1, \varphi_2$, and $\varphi_3$ satisfy the points encircled in the corresponding area. In this instance, $\varphi_1 \vee (\varphi_2 \wedge \varphi_3)$ is a (minimal) solution.

Our idea to tackle these scaling issues is to use the \vfb on the full $\LTLf$ only up to a fixed formula size, chosen based on a memory or time threshold.
Since the exponential blow-up does not allow to go much further beyond size~$8$, we then restrict our focus to the fragment of $\LTLf$ with only $\land$ and $\lor$ to move forward.
In other words, we keep all formulas we have generated and then only combine them with~$\land$ or~$\lor$.
This restricted problem is an instance of \emph{\bs}, that is solved as a subroutine in our $\LTLf$ learning algorithms.

A high-level view of the procedure is in Algorithm~\ref{algo:combinatorial_search_with_boolean_set_cover}.
In this algorithm, \textsc{VFB-Bounded} is a variant of \textsc{VFB} which only computes $\LTLf$ formulas up to some fixed size using operators from sets of unary and binary operators $O_1$ and~$O_2$.
This fixed size is a hyperparameter called~$\LTLBSSwitch$.
Algorithm \textsc{VFB-Bounded} returns two objects: a solution to the~$\LTLf$ learning instance (or $\bot$ if it has not found a solution), and a set $\mathcal{F}$ containing all generated formulas up to size $\LTLBSSwitch$.
If no solution was found, the generated formulas are fed to a \bs solver called \bsSolver.
From the point of view of the \bs solver, the $\LTLf$ formulas are letters of a new alphabet, each being associated with a positive integer called \emph{weight} (which is in practice the size of the $\LTLf$ formula).

\begin{algorithm}
	\small
	\caption{\vfb with \bs solver}\label{algo:combinatorial_search_with_boolean_set_cover}
	\textbf{Hyperparameter}: $\LTLBSSwitch$ (below, $k$), the maximal size of \LTLf formulas to enumerate before switching to \bs.
	\begin{algorithmic}[1]
		\Procedure{Search}{$\AP, O_1, O_2, P, N, k$}
		\State $S,\family \gets \textsc{VFB-Bounded}(\AP, O_1, O_2, k)$
		\If{$S \neq \bot$} \Comment{If the \vfb found a solution}
		\State \textbf{return} $S$
		\Else
		\State $\family' \gets$ \textsc{Collapse}($\family$)
		\State \textbf{return} $\bsSolver(\family', P, N)$
		\EndIf
		\EndProcedure
	\end{algorithmic}
\end{algorithm}

When we send the generated formulas to a \bs solver, two kinds of collapses occur (represented in Algorithm~\ref{algo:combinatorial_search_with_boolean_set_cover} with function \textsc{Collapse}).

First, once we restrict our focus to operators $\land$ and $\lor$, two formulas can be assumed to be observationally equivalent if and only if they have the same \emph{first bit} for each row of their characteristic table.
Indeed, as there is no more ``temporal phenomenon'', this is a sufficient information to determine whether a given trace satisfies a Boolean combination of (already evaluated) \LTLf formulas.
In other words, we move from the computation of \emph{tables} to \emph{vectors} (the vectors of first bits), which produces many collisions and reduces the number of formulas to consider.
We quantified this collapse in practice over a small set of tasks chosen randomly.
Let $\family$ denote the set of observationally non-equivalent $\LTLf$ formulas of size at most $8$, and $\family' = $ \textsc{Collapse}($\family$) denote the set of formulas with a distinct first column in the characteristic table.
On average over $780$ tasks, the ratio $\size{\family} / \size{\family'}$ is $3.54$, with a minimum of $1.09$, a maximum of $48.22$, and a standard deviation of $5.46$.

Second, storing vectors instead of tables is a massive gain: space-wise, as storing the evaluation of a formula is a bit vector of size $|P| + |N|$ instead of~$\ell \cdot (|P| + |N|)$, where $\ell$ is the total number of bits in an input trace, and time-wise, as new formulas can be evaluated faster.
This enables exploring much larger formulas (though in an incomplete way, as some operators are missing) at a reduced space and time cost.

\subsection{Definition and Properties of \bs} \label{secsub:bsIntro}
Let $P$ and $N$ be two disjoint finite sets.
Let $\family$ be a family of sets with $\bsSet \subseteq P \cup N$ for all~$\bsSet\in\family$.
We assume that a \emph{weight} $\bsSize{\bsSet}\ge 1$ is associated with each set $\bsSet\in\family$ (which is unrelated to the cardinality of $\bsSet$).

We define \emph{positive Boolean formulas} and their weights inductively:
\begin{itemize}
	\item $\emptyset$ is a formula of weight $0$;
	\item $\bsSet\in\family$ is a formula of weight $\bsSize{\bsSet}$;
	\item if $\bsFormula_1$ and $\bsFormula_2$ are positive Boolean formulas, then so are $\bsFormula_1 \cup \bsFormula_2$ and $\bsFormula_1 \cap \bsFormula_2$, both of weight $1 + \bsSize{\bsFormula_1} + \bsSize{\bsFormula_2}$.
\end{itemize}
The interpretation~$\bsEval{\bsFormula}$ of a positive Boolean formula $\bsFormula$ is simply the set it describes.
We adjust observational equivalence: two formulas $\bsFormula$ and $\bsFormula'$ are observationally equivalent if $\bsEval{\bsFormula} = \bsEval{\bsFormula'}$.

The \bs problem is as follows:

\begin{framed}
\centering
\begin{tabular}{p{1.9cm}p{10cm}}
	\normalsize\textbf{INPUT}:& \normalsize two disjoint finite sets $P$, $N$ and a family $\family$ of weighted subsets of~$P \cup N$\\
	\normalsize\textbf{OUTPUT}:& \normalsize a positive Boolean formula $\bsFormula$ of minimal weight such that $\bsEval{\bsFormula} = P$ (i.e., it contains all elements of~$P$ but none of~$N$)
\end{tabular}
\end{framed}

The \bs problem can be seen as a restriction of $\LTLf$ learning to a subset of the operators, namely $\land$ and $\lor$.
The \bs problem was briefly considered in~\cite{Raha.Roy.ea:2022} (under the name \emph{Boolean Subset Cover}).
We will compare our algorithm to theirs in Section~\ref{sec:experiments}.

\subparagraph*{Polynomial Criterion for a Solution of Arbitrary Weight.}
We show that we can check whether there exists a formula $\bsFormula$ (\emph{of arbitrary weight}) such that $\bsEval{\bsFormula} = P$ in polynomial time, just by evaluating a specific formula of length $\bigO(|P|\cdot |\family|)$.
This allows to quickly decide the nonexistence of a solution, and when a solution exists, it gives a polynomial upper bound on the weight of formulas to consider.

\begin{lemma}[Existence of a solution] \label{lem:solution_exists}
	Let $P$, $N$, $\family$ be an instance of \bs.
	There exists a positive Boolean formula $\bsFormula$ such that $\bsEval{\bsFormula} = P$ if and only if for all $p\in P$, $n\in N$, there is $\bsSet\in\family$ such that $p\in \bsSet$ and $n\notin \bsSet$.
	Moreover, when it exists, there is such a formula of size $\bigO(|\family|\cdot |P|)$.
\end{lemma}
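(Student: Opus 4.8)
The plan is to prove both directions of the equivalence and then construct an explicit formula of the claimed size.

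\textbf{The necessity (forward) direction.} First I would show that if a solution $\bsFormula$ with $\bsEval{\bsFormula} = P$ exists, then the separation condition holds. Fix any $p \in P$ and $n \in N$. Since $p \in \bsEval{\bsFormula}$ and $n \notin \bsEval{\bsFormula}$, the formula ``distinguishes'' $p$ from $n$. I would argue by structural induction on $\bsFormula$ that any positive Boolean formula (built from $\cup$ and $\cap$ over the atoms in $\family$) that contains $p$ but not $n$ must use, somewhere in its syntax tree, at least one atom $\bsSet \in \family$ with $p \in \bsSet$ and $n \notin \bsSet$. The base cases are immediate: the empty set contains neither, and an atom $\bsSet$ containing $p$ but not $n$ is exactly what we want. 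For the inductive step, if $\bsFormula = \bsFormula_1 \cup \bsFormula_2$ contains $p$ and not $n$, then some $\bsFormula_i$ contains $p$ (and neither contains $n$), so apply induction to that $\bsFormula_i$; if $\bsFormula = \bsFormula_1 \cap \bsFormula_2$, then both contain $p$ and at least one excludes $n$, so apply induction there. This yields the desired atom.

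\textbf{The sufficiency (backward) direction via explicit construction.} For the converse, I would directly build a formula witnessing the solution, which simultaneously gives the size bound. The key idea: for each fixed $p \in P$, the separation hypothesis gives, for every $n \in N$, some atom $\bsSet_{p,n} \in \family$ with $p \in \bsSet_{p,n}$ and $n \notin \bsSet_{p,n}$. Define
\begin{equation*}
	\bsFormula_p = \bigcap_{n \in N} \bsSet_{p,n}.
\end{equation*}
Then $p \in \bsEval{\bsFormula_p}$ (as $p$ lies in every conjunct), while every $n \in N$ is excluded by its corresponding conjunct $\bsSet_{p,n}$, so $\bsEval{\bsFormula_p} \cap N = \emptyset$ and $\bsEval{\bsFormula_p} \subseteq P$. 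Now take
\begin{equation*}
	\bsFormula = \bigcup_{p \in P} \bsFormula_p.
\end{equation*}
Each $p$ is covered by its own $\bsFormula_p$, so $P \subseteq \bsEval{\bsFormula}$, and since every $\bsEval{\bsFormula_p} \subseteq P$ we get $\bsEval{\bsFormula} = P$ exactly, as required.

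\textbf{The size bound.} It remains to count nodes. Each $\bsFormula_p$ is an intersection of at most $|N|$ atoms, using at most $|N| - 1$ intersection operators, for $\bigO(|N|)$ nodes; and $\bsFormula$ is a union of $|P|$ such subformulas with $|P| - 1$ union operators, giving total size $\bigO(|P| \cdot |N|)$. I would note that the lemma claims $\bigO(|\family| \cdot |P|)$ rather than $\bigO(|P| \cdot |N|)$; the two agree up to which quantity dominates, and one can always reuse atoms so that the number of \emph{distinct} atoms appearing is at most $|\family|$, but the real point is simply that the formula is polynomial in the instance size. The main subtlety I would watch for is ensuring the construction handles degenerate cases cleanly: if $N = \emptyset$, then each $\bsFormula_p$ is an empty intersection and one should instead pick any atom containing $p$ (the hypothesis is vacuous, so a separate check that some atom covers each $p$ may be needed, or one notes that $\bsEval{\bsFormula} = P$ forces such atoms to exist); and if $P = \emptyset$, the empty-set formula $\emptyset$ is the solution. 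These edge cases are the only place the clean inductive argument needs care.
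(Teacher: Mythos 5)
Your proof is correct and follows essentially the same route as the paper's: a structural induction for necessity (you argue directly that a separating formula must contain a separating atom, where the paper proves the contrapositive — same content) and an explicit union-of-intersections construction for sufficiency. The one substantive difference is your choice of $\bsFormula_p = \bigcap_{n\in N}\bsSet_{p,n}$, which as written has up to $|N|$ conjuncts and hence total size $\bigO(|P|\cdot|N|)$; the paper instead takes $\bsFormula_p = \bigcap_{\bsSet\in\family,\, p\in\bsSet}\bsSet$, which has at most $|\family|$ conjuncts and yields the stated $\bigO(|\family|\cdot|P|)$ bound immediately. Your version is easily repaired by deduplicating conjuncts (leaving at most $\min(|N|,|\family|)$ distinct ones per $p$), but your closing remark that ``the real point is that the formula is polynomial'' sidesteps rather than establishes the bound actually claimed, so make the deduplication explicit. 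Your observation about the $N=\emptyset$ degenerate case (the hypothesis becomes vacuous while an empty intersection is not a well-formed witness) is a fair one and applies equally to the paper's construction.
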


\begin{proof}
	For the left-to-right implication, we show the contrapositive.
	Assume there is $p\in P$ and $n\in N$ such that, for all $\bsSet\in\family$, $p\in\bsSet$ implies $n\in\bsSet$.
	One can then show by induction that any formula~$\bsFormula'$ with $p\in\bsEval{\bsFormula'}$ also has $n\in\bsEval{\bsFormula'}$.

	For the right-to-left implication, we build a positive Boolean formula $\bsFormula$ such that $\bsEval{\bsFormula} = P$.
	This formula happens to be of size $\bigO(|\family|\cdot |P|)$, thereby also settling the claim about the size.
	Assume that for all $p\in P$, $n\in N$, there is $\bsSet\in\family$ such that $p\in \bsSet$ and $n\notin \bsSet$.
	For~$p\in P$, let $\bsFormula_p = \bigcap_{\bsSet\in\family, p \in \bsSet} \bsSet$ and $\bsFormula = \bigcup_{p\in P} \bsFormula_p$.
	Due to the hypothesis, for all $p\in P$, we have $p\in \bsEval{\bsFormula_p}$ and $N \cap \bsEval{\bsFormula_p} = \emptyset$.
	Hence, $\bsEval{\bsFormula} = P$.
\end{proof}

\subparagraph*{Complexity.}
\bs is a generalization of the NP-complete \emph{set cover} problem~\cite{Karp:1972}: an instance of the set cover problem corresponds to an instance of \bs with~$N = \emptyset$ (in which case using $\cap$ is futile, and minimal formulas can be obtained only using $\cup$).
This proves NP-hardness of the decision problem for \bs.
Since there are short solutions by Lemma~\ref{lem:solution_exists}, and since computing the set described by a Boolean formula is polynomial in its length, \bs is NP-complete.

\subsection{Domination for positive Boolean Formulas} \label{secsub:domination}
In this section, we describe \emph{domination}, a generalization of observational equivalence that identifies even more redundant objects than observational equivalence.

Intuitively, we say that a formula $\bsF_1$ \emph{dominates} a formula~$\bsF_2$ if $\bsF_2$ can be replaced with $\bsF_1$ in any formula without increasing its weight or decreasing its ``quality''.
More formally, let $\bsSat{\bsF}$ be the subset of elements of $P \cup N$ that $\bsF$ classifies correctly, i.e.,
$\bsSat{\bsF} = (\bsEval{\bsF} \cap P) \cup (N \setminus \bsEval{\bsF})$.
The goal of \bs can then be rephrased as finding a formula $\bsF$ such that $\bsSat{\bsF} = P \cup N$.
\begin{definition}
	\label{def:domination}
	A formula $\bsF_2$ is \emph{dominated} by a formula $\bsF_1$, denoted $\bsF_2 \domin \bsF_1$, if $\bsSize{\bsF_1} \le \bsSize{\bsF_2}$ and $\bsSat{\bsF_2} \subseteq \bsSat{\bsF_1}$.
\end{definition}

We claim that this notion corresponds to the above intuition.
Formally, for a positive Boolean formula $\bsF$, let $\bsF[\bsF_2 \leftarrow \bsF_1]$ denote the formula obtained by replacing any occurrence of $\bsF_2$ in $\bsF$ by $\bsF_1$; this operation is defined inductively as follows:
\begin{align*}
	F_i[\bsF_2 \leftarrow \bsF_1] &= \begin{cases}
		\bsF_1& \text{if } F_i = \bsF_2\\
		F_i& \text{otherwise,}
	\end{cases}\\
	\bsF_2[\bsF_2 \leftarrow \bsF_1] &= \bsF_1,\\
	(\bsF \cap \bsF')[\bsF_2 \leftarrow \bsF_1] &= \bsF[\bsF_2 \leftarrow \bsF_1] \cap \bsF'[\bsF_2 \leftarrow \bsF_1],\\
	\text{and }(\bsF \cup \bsF')[\bsF_2 \leftarrow \bsF_1] &= \bsF[\bsF_2 \leftarrow \bsF_1] \cup \bsF'[\bsF_2 \leftarrow \bsF_1].
\end{align*}
Any occurrence of a dominated formula can be replaced with its dominating formula:
\begin{lemma} \label{lemma:domin-works}
	Let $\bsF$, $\bsF_1$, and $\bsF_2$ be positive Boolean formulas.
	If $\bsF_2 \domin \bsF_1$, then $\bsF \domin \bsF[\bsF_2 \leftarrow \bsF_1]$.
\end{lemma}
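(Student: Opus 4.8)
The plan is to prove the statement by structural induction on $\bsF$, establishing both halves of $\bsF \domin \bsF[\bsF_2 \leftarrow \bsF_1]$ — the weight inequality $\bsSize{\bsF[\bsF_2 \leftarrow \bsF_1]} \le \bsSize{\bsF}$ and the inclusion $\bsSat{\bsF} \subseteq \bsSat{\bsF[\bsF_2 \leftarrow \bsF_1]}$ — at the same time. There are two base cases. If $\bsF = \bsF_2$, then $\bsF[\bsF_2 \leftarrow \bsF_1] = \bsF_1$ and the claim is exactly the hypothesis $\bsF_2 \domin \bsF_1$. If $\bsF \in \family$ is an atom different from $\bsF_2$, then $\bsF[\bsF_2 \leftarrow \bsF_1] = \bsF$ and the claim is the trivial $\bsF \domin \bsF$. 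The weight inequality is immediate throughout: each occurrence of $\bsF_2$ contributes $\bsSize{\bsF_2}$ and is replaced by $\bsF_1$ of weight $\bsSize{\bsF_1} \le \bsSize{\bsF_2}$, while every $+1$ coming from a connective is preserved, so an easy induction gives $\bsSize{\bsF[\bsF_2 \leftarrow \bsF_1]} \le \bsSize{\bsF}$.

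For the inductive step, suppose $\bsF = \bsF_a \op \bsF_b$ with $\op \in \set{\cup, \cap}$ and $\bsF \ne \bsF_2$, so that $\bsF[\bsF_2 \leftarrow \bsF_1] = \bsF_a' \op \bsF_b'$ where $\bsF_a' = \bsF_a[\bsF_2 \leftarrow \bsF_1]$ and $\bsF_b' = \bsF_b[\bsF_2 \leftarrow \bsF_1]$. The induction hypothesis gives $\bsF_a \domin \bsF_a'$ and $\bsF_b \domin \bsF_b'$, so the whole lemma reduces to a \emph{congruence} property of $\domin$: if $\bsF_a \domin \bsF_a'$ and $\bsF_b \domin \bsF_b'$, then $\bsF_a \op \bsF_b \domin \bsF_a' \op \bsF_b'$ for each $\op \in \set{\cup, \cap}$. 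The weight part of this congruence follows from the previous paragraph, so what remains is the inclusion $\bsSat{\bsF_a \op \bsF_b} \subseteq \bsSat{\bsF_a' \op \bsF_b'}$, which is the core of the argument. Note that the congruence is proved directly, not by the outer induction, so there is no circularity.

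The main obstacle is that $\domin$ bundles together two inclusions pointing in opposite directions: unfolding $\bsSat{\cdot}$, the relation $\bsF_a \domin \bsF_a'$ says $\bsEval{\bsF_a} \cap P \subseteq \bsEval{\bsF_a'} \cap P$ on positive elements but $\bsEval{\bsF_a'} \cap N \subseteq \bsEval{\bsF_a} \cap N$ on negative ones, so a single monotonicity argument on $\bsEval{\cdot}$ cannot work. The resolution is to split the verification according to whether an element lies in $P$ or in $N$, exploiting that $\cup$ and $\cap$ are monotone. Concretely, take $x \in \bsSat{\bsF_a \op \bsF_b}$. If $x \in P$, then $x \in \bsEval{\bsF_a \op \bsF_b}$, and a short case check on $\op$ (deducing $x \in \bsSat{\bsF_a}$ and/or $x \in \bsSat{\bsF_b}$, then passing to $\bsF_a', \bsF_b'$ via the dominations) shows $x \in \bsEval{\bsF_a' \op \bsF_b'}$; if $x \in N$, then $x \notin \bsEval{\bsF_a \op \bsF_b}$, and the dual case check shows $x \notin \bsEval{\bsF_a' \op \bsF_b'}$. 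In both cases $x \in \bsSat{\bsF_a' \op \bsF_b'}$, which establishes the congruence and completes the induction.
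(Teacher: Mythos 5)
Your proof is correct and follows essentially the same route as the paper's: structural induction on $\bsF$, with the inductive step reduced to the fact that $\domin$ is a congruence for $\cup$ and $\cap$, which in turn rests on the key observation that, since $P$ and $N$ are disjoint, $\bsSat{\bsF_2} \subseteq \bsSat{\bsF_1}$ splits into $\bsEval{\bsF_2} \cap P \subseteq \bsEval{\bsF_1} \cap P$ and $N\setminus \bsEval{\bsF_2} \subseteq N\setminus\bsEval{\bsF_1}$. The only (cosmetic) difference is that you verify the monotonicity of $\bsSat{\cdot}$ pointwise by a case split on $x \in P$ versus $x \in N$, whereas the paper does the equivalent set-algebraic calculation.
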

\begin{proof}
	First, note that, as $P$ and $N$ are disjoint, then if $\bsSat{\bsF_2} \subseteq \bsSat{\bsF_1}$ we have:
	\begin{equation}
		\bsEval{\bsF_2} \cap P \subseteq \bsEval{\bsF_1} \cap P \text{ and } N\setminus \bsEval{\bsF_2} \subseteq N\setminus\bsEval{\bsF_1}.\label{eq:inclusion}
	\end{equation}

	We now show that the $\bsSat{\cdot}$ function is \emph{monotone} with respect to the $\cup$ and $\cap$ operators:
	if $\bsSat{\bsF_2} \subseteq \bsSat{\bsF_1}$, then for any formula $\bsF$, we have $\bsSat{\bsF_2 \cap \bsF} \subseteq \bsSat{\bsF_1 \cap \bsF}$ and $\bsSat{\bsF_2 \cup \bsF} \subseteq \bsSat{\bsF_1 \cup \bsF}$.
	This can be shown through direct calculation using \cref{eq:inclusion}:
	\begin{align}
		\bsSat{\bsF_2 \cap \bsF}
		&= (\bsEval{\bsF_2 \cap \bsF} \cap P) \cup (N \setminus \bsEval{\bsF_2 \cap \bsF})\notag\\
		&= (\bsEval{\bsF_2} \cap \bsEval{\bsF} \cap P) \cup (N \setminus (\bsEval{\bsF_2} \cap \bsEval{\bsF}))\notag\\
		&= ((\bsEval{\bsF_2} \cap P) \cap \bsEval{\bsF}) \cup ((N \setminus \bsEval{\bsF_2}) \cup (N\setminus \bsEval{\bsF}))\notag\\
		&\subseteq ((\bsEval{\bsF_1} \cap P) \cap \bsEval{\bsF}) \cup ((N \setminus \bsEval{\bsF_1}) \cup (N\setminus \bsEval{\bsF}))\label{line:incl}\\
		&= (\bsEval{\bsF_1} \cap \bsEval{\bsF} \cap P) \cup (N \setminus (\bsEval{\bsF_1} \cap \bsEval{\bsF}))\notag\\
		&= \bsSat{\bsF_1 \cap \bsF}.\notag
	\end{align}
	The inclusion in eq.~\eqref{line:incl} follows from eq.~\eqref{eq:inclusion}.
	A similar calculation shows the inclusion for the $\cup$ operator.

	Similarly, the $\bsSize{\cdot}$ function is also monotone: if $\bsSize{\bsF_1} \le \bsSize{\bsF_2}$, then
	$\bsSize{\bsF_1 \cap \bsF} \le \bsSize{\bsF_2 \cap \bsF}$ (and similarly for
	$\cup$) for any formula $\bsF$.

	We can now prove that if $\bsF_2 \domin \bsF_1$, then $\bsF \domin \bsF[\bsF_2 \leftarrow \bsF_1]$ by structural induction over~$\bsF$.
	The base cases follow from the definition of domination, and the cases $\bsF = \bsF' \cap \bsF''$ or $\bsF = \bsF' \cup \bsF''$ are handled using the monotonicity of the $\bsSat{\cdot}$ and $\bsSize{\cdot}$ functions.
\end{proof}

Domination can be thought of as a variation on \emph{Property dependence} (see Theorem 6 in~\cite{Dureja.Rozier:2018}), used to prune $\LTLf$ formulas for efficient model-checking.

\subparagraph*{Further Reducing the Input Set.}
\Cref{lemma:domin-works} implies that we can preprocess $\family$ to only keep maximal elements for~$\domin$ (which means that we transform the input into an antichain for~$\domin$).
By encoding the characteristic vectors of formulas in bits of integers, testing whether $\bsF_1$ dominates $\bsF_2$ can be implemented efficiently using bitwise operations.
All our algorithms for \bs start by reducing $\family$ in this way.

\subparagraph*{The Need for a Heuristic.}
We argue that the existence of a subquadratic algorithm to completely reduce a set $\family$ with domination is unlikely.
The obvious algorithm to find whether a given formula $\bsF\in\family$ is dominated by a formula $\bsF'\in\family$ is to go through every formula in $\family$ and check whether it dominates~$\bsF$.
This takes linear time,
and when there are $O(n)$ formulas~$\bsF$ and~$\bsF'$, this approach takes quadratic time.
We do not expect a faster algorithm:
the \emph{Orthogonal Vectors problem}~\cite{Williams:2005} can be reduced to whether there exists a formula~$\bsFormula_1\in A$ that dominates a formula $\bsFormula_2 \in B$ where $A$, $B$ are sets of $n$ formulas,
and a strongly subquadratic algorithm for Orthogonal Vectors would imply that the Strong Exponential Time Hypothesis is false.
The reduction between our problem and the Orthogonal Vectors problem is presented in~\cite{Borassi.Crescenzi.ea:2015}.

\subparagraph*{Fast Approximate Domination.}
Yet, a quadratic running time is prohibitive in our application where we need to test domination for millions of Boolean formulas:
we therefore propose a heuristic that drastically speeds up the search of a dominating formula while allowing for some false negatives.

Instead of searching through all of $\family$ for dominating formulas, we restrict the search to a small subset of candidates
with a high likelihood of dominating other formulas.
For a given weight value $w$ and an integer $k$, let $\bsTop(w, k)$ denote the $k$ formulas $\bsF$ that maximize $\size{\bsSat{\bsF}}$ among formulas of weight $w$ in~$\family$.
Our heuristic searches for a formula $\bsF$ that dominates $\bsF'$ in the sets $\bsTop(w, k)$ for $w \le \bsSize{\bsF'}$.
The value chosen for $k$ controls the trade-off between speed and accuracy of the technique.
We illustrate the gain in practice over a small set of tasks chosen randomly, which is substantial even for small values of $k$.
Let $\family'$ be as in Algorithm~\ref{algo:combinatorial_search_with_boolean_set_cover}, and $\family'' = \textsc{FastNonDominated}(\family', k)$ be the result of the shrinkage of $\family'$ by the above approximate algorithm.
On average, over $780$ tasks, the ratio $\size{\family'}/\size{\family''}$ is $3.45$ for~$k = 3$, $4.07$ for $k = 5$, $4.97$ for $k = 10$, $5.94$ for~$k = 25$, and $6.60$ for $k = 50$.

\subsection{Divide and Conquer} \label{secsub:divideConquer}
Before describing our algorithms, we show a general procedure to extend any solver for \bs to mitigate its inability to find a valid solution (due to time, space, or incompleteness of the algorithm).
The general idea is part of the $\LTLf$ learning algorithm from~\cite{Valizadeh.Fijalkow.ea:2024}.
It is an application of \emph{divide and conquer}: when a \bs solver fails to find a solution, divide $P$ or $N$ in two smaller subsets, solve the two subproblems, and combine them into a solution.
We show the complete pseudocode for the divide-and-conquer algorithm in Algorithm~\ref{algo:divideAndConquer}.

\begin{algorithm*}[tbph]
	\caption{Divide and conquer (meta-algorithm for \bs)}\label{algo:divideAndConquer}
	\begin{algorithmic}[1]
		\Procedure{\DandC}{$\family, P, N$}
		\While{$|P| > 1$ or $|N| > 1$}
		\State{$\bsFormula \gets$ \bsSolver($\family, P, N$)}
		\If{$\bsEval{\bsFormula} = P$} \Comment{Valid solution}
		\State{\Return{$\bsFormula$}}
		\EndIf
		\If{$|P| \ge |N|$}
		\State{$P_1, P_2 \gets$ split $P$ into two subsets of roughly equal cardinality}
		\State{$\bsFormula_1 \gets$ \DandC($\family_{\restriction P_1\cup N}, P_1, N$)}
		\State{$\bsFormula_2 \gets$ \DandC($\family_{\restriction P_2\cup N}, P_2, N$)}
		\State{\Return{$\bsFormula_1 \cup \bsFormula_2$}}
		\Else
		\State{$N_1, N_2 \gets$ split $N$ into two subsets of roughly equal cardinality}
		\State{$\bsFormula_1 \gets$ \DandC($\family_{\restriction P\cup N_1}, P, N_1$)}
		\State{$\bsFormula_2 \gets$ \DandC($\family_{\restriction P\cup N_2}, P, N_2$)}
		\State{\Return{$\bsFormula_1 \cap \bsFormula_2$}}
		\EndIf
		\EndWhile
		\State{take $p\in P$, $n\in N$} \Comment{Here, $P = \{p\}$ and $N = \{n\}$}
		\State{\Return{$F\in\family$ of minimal weight s.t.\ $p\in F$, $n\notin F$}}\label{line:baseCaseDC}
		\EndProcedure
	\end{algorithmic}
\end{algorithm*}

The meta-algorithm first calls a \bs solver; we assume that this algorithm always returns a formula, but which may not be a valid solution.
In case no solution is found, the problem is broken into two subproblems.
If $|P| \ge |N|$, then $P$ is randomly split into two subsets $P_1$, $P_2$ of roughly equal cardinality.
From this, we consider two smaller instances of \bs: $(P_1, N, \family_{\restriction P_1\cup N})$ and $(P_2, N, \family_{\restriction P_2\cup N})$, where $\family_{\restriction U} = \{ F\cap U\mid F\in\family\}$.
The two subproblems are solved recursively, yielding respectively solutions $\bsFormula_1$ and~$\bsFormula_2$; observe that~$\bsFormula = \bsFormula_1 \cup \bsFormula_2$ is a solution to the original problem.
If~$|N| > |P|$, we split $N$ and the arguments are symmetrical.

Importantly, this algorithm \emph{always finds a solution if there exists one}, no matter how the \bs solver is implemented.
Indeed, if the \bs solver never returns a valid solution, we end up in the base case for divide and conquer (line~\ref{line:baseCaseDC} in Algorithm~\ref{algo:divideAndConquer}), where both $P$ and $N$ contain a single element --- assume $P = \set{p}$ and~$N = \set{n}$.
Per Lemma~\ref{lem:solution_exists}, an~$\bsSet$ such that $p\in \bsSet$ and $n\in \bsSet$ necessarily exists if there is a solution, and failure to exist immediately indicates that there is no solution to the \bs problem.
This shows that the algorithm always returns a valid solution if there exists one, no matter how the \bs solver is implemented.
However, there is no guarantee of minimality.

Although the idea of this algorithm originated from~\cite{Valizadeh.Fijalkow.ea:2024} for \LTLf learning, we introduce two optimizations.

The first is to use the solution from the first subproblems to simplify the second subproblems further;
if~$\bsFormula_1$ is the returned solution to the instance $(P_1, N, \family_{\restriction P_1\cup N})$, it may already contain elements of $P_2$.
It therefore suffices to solve the instance $(P_2 \setminus \bsEval{\bsFormula_1}, N, \family_{\restriction (P_2 \setminus \bsEval{\bsFormula_1})\cup N})$ for the second subproblem.
This optimization means that the solution may depend on the order in which we solve the two subproblems.

The second is related to \bs: the divide and conquer only combines existing formulas with operators $\cup$ and $\cap$, and can therefore be seen as a \bs algorithm.
This means that once it is called, the other optimizations for \bs (reducing evaluations from tables to vectors, stronger collapse due to observational equivalence, domination reduction) can be applied.

\section{Algorithm for \bs} \label{sec:algorithmsBS}
To solve \bs, we propose a greedy algorithm based on \emph{beam search}.
The gist is to enumerate formulas in order of their weight but only keep a fixed number of ``best'' formulas of each weight.
The notion of ``best'' is again based on the cardinality of $\bsSat{\bsFormula}$, below called \emph{score}.
The number of formulas to keep for each weight is a hyperparameter \beamwidth, and the parameter \dcswitch is the depth at which we stop exploring and simplify the problem using divide-and-conquer.

We provide the full pseudocode of Beam Search in Algorithm~\ref{algo:memgreedy_bs}.
For each weight, a ``$\min$'' priority queue $\bspq$ stores the (at most) \beamwidth formulas of this weight with the largest score, where the priority queue allows for an efficient query of the stored formula with the lowest score.

\begin{algorithm}[tbph]
	\caption{Beam search algorithm for \bs}\label{algo:memgreedy_bs}
	\textbf{Hyperparameters}:
	\begin{itemize}
		\item \dcswitch (not shown below), the weight up to which to enumerate before splitting with \DandC;
		\item \beamwidth (below, $\bsLim$), the maximal number of promising formulas to store for each weight.
	\end{itemize}
	\begin{algorithmic}[1]
		\Procedure{AddBounded}{$\bsFormula, P, N, \bspq, \bsLim$}
		\If{$\mathsf{size}(\bspq) < \bsLim$ or $\bsScore(\bsFormula) > \bsScore(\min(\bspq))$}
		\State{Insert $\bsFormula$ into $\bspq$}
		\EndIf
		\If{$\mathsf{size}(\bspq) > \bsLim$}
		\State{Pop minimal element from $\bspq$}
		\EndIf
		\EndProcedure

		\Procedure{BeamSearch}{$P, N, \family, \bsLim$}
		\State{$\bspq_1, \ldots, \bspq_{\max_{\bsSet\in\family} \bsSize{\bsSet}} \gets$ empty priority queues}
		\For{$\bsSet\in \family$}
		\State{\textsc{AddBounded}($\bsSet, P, N, \bspq_{\bsSize{\bsSet}}, \bsLim$)}
		\EndFor

		\State $k \gets 2$
		\State $\bsMem \gets \bspq_1 \cup \bspq_2$
		\While{True}
		\For{$i, j \ge 1, i + j = k$}
		\For{$\bsFormula_1 \in \bspq_{i}, \bsFormula_2 \in \bspq_{j}$}
		\For{$\op \in \{\cup, \cap\}$}
		\State $\bsFormula \gets \bsFormula_1 \mathop{\op} \bsFormula_2$ \Comment{$\bsFormula$ has weight $k+1$}
		\If{there exists $\bsF' \in \bsMem$ s.t.\ $\bsF'$ and $\bsF$ are observationally equivalent}
		\State \textbf{continue}
		\EndIf
		\If{there exists $\bsF' \in \bsMem$ s.t.\ $\bsF \domin \bsF'$}
		\State \textbf{continue}
		\EndIf
		\If{$\bsEval{\bsFormula} = P$}
		\State \textbf{return} $\bsFormula$
		\EndIf
		\State \textsc{AddBounded}($\bsFormula, P, N, \bspq_{k+1}, \bsLim$)
		\EndFor
		\EndFor
		\EndFor
		\State $\bsMem \gets \bsMem \cup \bspq_{k+1}$
		\State $k \gets k + 1$
		\EndWhile
		\EndProcedure
	\end{algorithmic}
\end{algorithm}

\section{Experiments} \label{sec:experiments}
\begin{table*}[t]
	\centering
	\resizebox{0.99\textwidth}{!}{\begin{tabular}{lcccccl}\toprule
& \multicolumn{2}{c}{solved tasks (TO 60~s)} & \multicolumn{2}{c}{avg.\ time (s)} & \multicolumn{2}{c}{avg.\ size ratio}\\\cmidrule(lr){2-3}\cmidrule(lr){4-5}\cmidrule(lr){6-7}
	& Scarlet & \tool & Scarlet & \tool & Scarlet & \tool \\\midrule
All benchmarks & 11468 / 15595 & 14374 / 15595 & 4.21 & 2.05 & 1.15 & 1.01\\\bottomrule
Fixed formulas & 74 / 81 & 81 / 81 & 1.83 & 0.00 & 1.19 & 1.00\\
Ordered Sequence & 1188 / 2160 & 2160 / 2160 & 7.16 & 0.28 & 1.70 & 1.00\\
Subword & 532 / 1256 & 1182 / 1256 & 5.86 & 8.56 & 1.43 & 1.01\\
Subset & 1304 / 1800 & 1082 / 1800 & 10.58 & 3.69 & 1.17 & 1.00\\
Hamming & 3 / 500 & 304 / 500 & 42.90 & 5.93 & 1.65 & 1.16\\
Single counter & 17 / 46 & 22 / 46 & 7.79 & 1.38 & 1.40 & 1.05\\
Double counter & 7 / 18 & 7 / 18 & 0.39 & 0.00 & 1.09 & 1.00\\
Nim & 23 / 63 & 22 / 63 & 2.16 & 0.00 & 1.15 & 1.00\\
Random conjuncts & 7549 / 8172 & 8074 / 8172 & 2.13 & 0.35 & 1.06 & 1.00\\
Random Boolean combinations & 771 / 1499 & 1440 / 1499 & 8.21 & 7.04 & 1.24 & 1.08\\\bottomrule
\end{tabular}
}
	\caption{Number of tasks solved by \tool and Scarlet~\cite{Raha.Roy.ea:2024} per benchmark family. For each algorithm and benchmark, we also report the average time (using \emph{arithmetic mean}) and average size ratio (using \emph{geometric mean}) on tasks that do not time out. \tool is faster and returns smaller formulas on average, even as we average over the $2906$~tasks for which \tool returns a formula but Scarlet does not.}
	\label{table:boltVSscarlet}
\end{table*}%
\begin{figure*}[t]
	\centering
	\includegraphics[width=.999\textwidth]{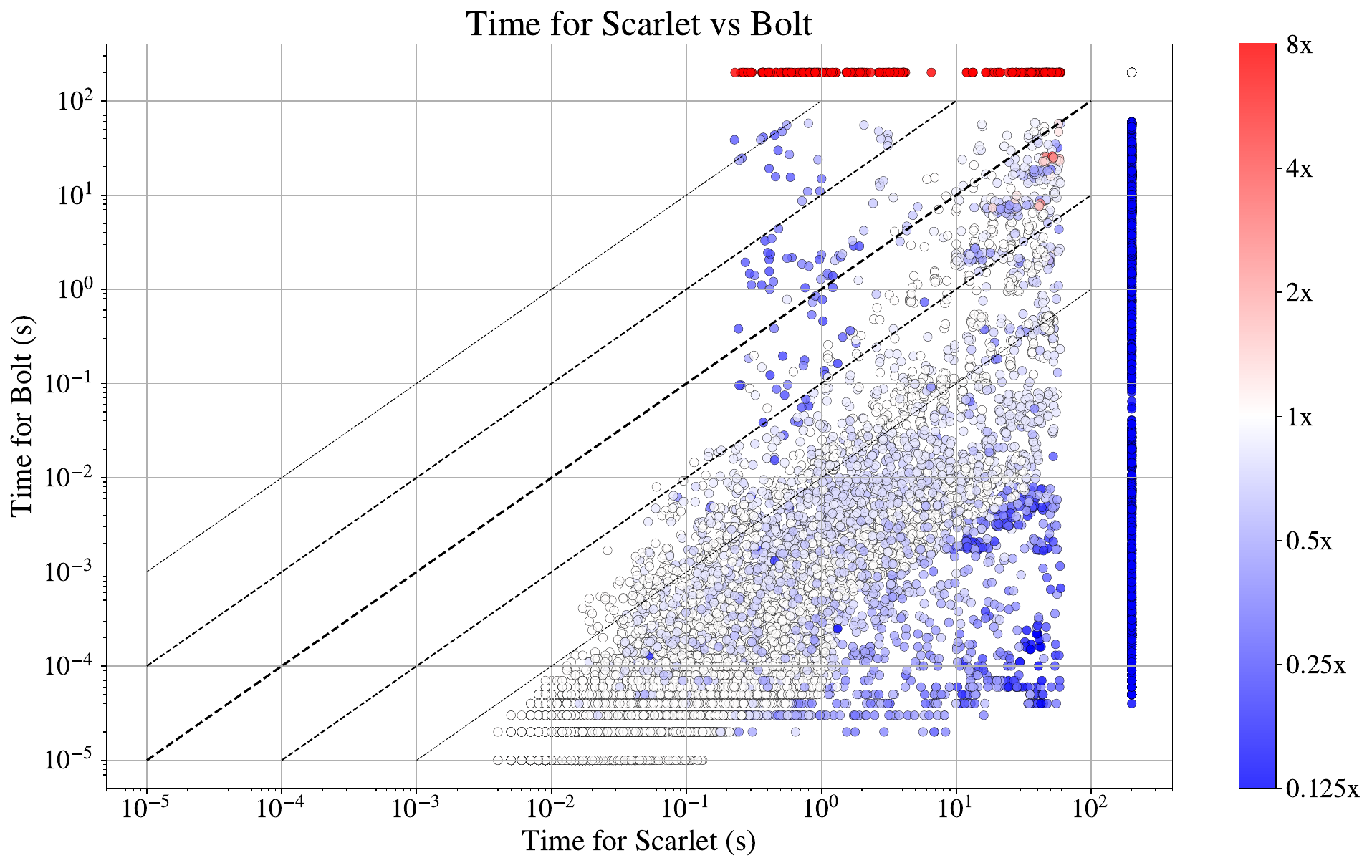}
	\caption{Comparison of running times (logarithmic axes) between Scarlet~\cite{Raha.Roy.ea:2024} and \tool. The diagonal lines represent ratios of time between the two tools: from bottom to top, \tool is $100$x faster, $10$x faster, equal, $10$x slower, and $100$x slower than Scarlet.
	Tasks that time out are assumed to be $200$~s so that they are apart in the plot. The timeout used in practice is $60$~s.
	The color represents the ratio of the size of the formula returned by \tool against the size of the formula returned by Scarlet.}
	\label{fig:comparison_plot}
\end{figure*}%

We perform experiments to address the following questions:
\begin{enumerate}
    \item What is \tool's performance against the state of the art?
    \item How is \tool's performance impacted by the switch to the Boolean Set Cover problem? Is it better than simply enumerating all LTL formulas?
\end{enumerate}
For each question, there are two (independent) metrics: wall-clock time and formula size.

\subsection{Benchmarks and State of the Art} \label{secsub:benchmarks}
As discussed in Section~\ref{sec:intro}, many tools have been constructed recently for $\LTLf$ learning.
As an independent contribution, we propose a consolidated benchmark suite of over $15,\!000$ $\LTLf$ learning tasks with difficulty ranging from very easy (solved by most existing tools) to very hard (not solved by any tool).
Our benchmark suite includes $10$~families, encompassing all publicly available benchmarks for $\LTLf$ learning we are aware of.
All but one family are inspired by existing benchmarks either directly for $\LTLf$ learning, or for related problems on $\LTLf$: in particular, we take advantage of SYNTCOMP's extensive collection of $\LTLf$ formulas~\cite{JacobsPABCCDDDFFKKLMMPR24}.
The benchmark suite is available at \benchmarkLink.

Each of the $10$~families consists of two scripts: $(i)$~a \textit{formula-generating script} for generating $\LTLf$ formulas, and $(ii)$~a \textit{task-generating script} for generating positive and negative traces for a formula in the family.
Following~\cite{Raha.Roy.ea:2022}, we implement a generic task-generating script based on compiling the $\LTLf$ formulas into deterministic automata (using the tool \emph{Spot}~\cite{Spot}) and sampling accepting and rejecting runs of a given length.
For some families, dedicated task-generating scripts yield finer and more challenging tasks.

Both scripts can be either deterministic or stochastic\footnote{In case of stochastic scripts, they are seeded for reproducibility.}, and use four parameters: trace length, number of atomic propositions, number of positive traces, and number of negative traces.
For generating the benchmark suite, we fixed trace lengths to be in $\set{16,32,64}$, and numbers of positive and negative traces to be in $\set{5,20,100}$.
As we will see, this produces challenging benchmarks --- should future progress move so far as to solve them all efficiently, changing these parameters will yield even harder benchmarks.

Let us briefly describe the $10$~families.
\subparagraph*{Fixed formulas.} The very first benchmarks were proposed in~\cite{Neider.Gavran:2018} and inspired by the seminal paper~\cite{Dwyer.Avrunin.ea:1999}, which identified a set of commonly used $\LTL$ formulas. The formulas being rather small, the corresponding tasks are typically easy.

\begin{table*}[t]
	\centering
	\resizebox{0.99\textwidth}{!}{\begin{tabular}{lcccccl}\toprule
& \multicolumn{2}{c}{solved tasks (TO 60~s)} & \multicolumn{2}{c}{avg.\ time (s)} & \multicolumn{2}{c}{avg.\ size ratio}\\\cmidrule(lr){2-3}\cmidrule(lr){4-5}\cmidrule(lr){6-7}
	& GPU & \tool & GPU & \tool & GPU & \tool \\\midrule
All benchmarks & 10948 / 11038 & 10259 / 11038 & 2.48 & 2.56 & 1.09 & 1.02\\\bottomrule
Fixed formulas & 54 / 54 & 54 / 54 & 0.78 & 0.00 & 1.06 & 1.00\\
Ordered Sequence & 1440 / 1440 & 1440 / 1440 & 0.84 & 0.40 & 1.04 & 1.00\\
Subword & 1242 / 1256 & 1182 / 1256 & 8.91 & 8.56 & 1.29 & 1.01\\
Subset & 1174 / 1200 & 710 / 1200 & 2.06 & 3.71 & 1.10 & 1.00\\
Hamming & 308 / 320 & 284 / 320 & 12.83 & 6.15 & 1.16 & 1.17\\
Single counter & 22 / 31 & 15 / 31 & 7.62 & 2.01 & 1.22 & 1.07\\
Double counter & 12 / 12 & 5 / 12 & 5.47 & 0.00 & 1.03 & 1.00\\
Nim & 28 / 42 & 15 / 42 & 1.67 & 0.00 & 1.06 & 1.00\\
Random conjuncts & 5442 / 5448 & 5366 / 5448 & 0.80 & 0.42 & 1.04 & 1.00\\
Random Boolean combinations & 1226 / 1235 & 1188 / 1235 & 3.14 & 7.47 & 1.17 & 1.09\\\bottomrule
\end{tabular}
}\caption{Number of tasks solved by \tool and the GPU algorithm~\cite{Valizadeh.Fijalkow.ea:2024}, per benchmark family.
	We omit in this comparison the $4557$ tasks that the GPU algorithm cannot handle due to technical constraints (trace lengths of size $64$ or more).
	See Table~\ref{table:boltVSscarlet} for an explanation of the columns.
	The GPU algorithm returns formulas for more tasks, but \tool is competitive in terms of time and size ratio over tasks solved by both algorithms.}
	\label{table:boltVSGPU}
\end{table*}%
\subparagraph*{Hamming.} A family of very hard tasks was created in~\cite{Valizadeh.Fijalkow.ea:2024}. It consists of a single positive trace and many negative traces obtained by changing a few bits from the positive one.

\subparagraph*{Deterministic parametric families.} The next three families are based on parametric families of $\LTLf$ formulas:
\begin{itemize}
	\item \emph{OrderedSequence} considers formulas $a_0 \lU (a_1 \lU (a_2 \dots))$, called ``Uright'' in~\cite{JacobsPABCCDDDFFKKLMMPR24},
	\item \emph{Subword}~\cite{Raha.Roy.ea:2022} considers formulas $\lF(a_0 \wedge \lX \lF(a_1 \wedge \dots))$ that express the existence of a fixed subword,
	\item \emph{Subset}~\cite{Raha.Roy.ea:2022} considers formulas $\lF a_0 \wedge \lF a_1 \wedge \lF a_2 \wedge \dots$ that express the presence of a subset of atomic propositions.
\end{itemize}

\subparagraph*{SYNTCOMP families.} The next three families were defined for the SYNTCOMP competition:
\emph{SingleCounter} and \emph{DoubleCounter} reason on counter increments, and \emph{Nim} encodes the classical eponymous game.

\subparagraph*{Randomized families.} The last two families are sampling methods for $\LTLf$ formulas:
\begin{itemize}
	\item \emph{RandomConjunctsFromBasis} considers conjunctions of a fixed number of formulas (after applying random permutations of the variables)~\cite{10.1007/3-540-48683-6_23},
	\item \emph{RandomBooleanCombinationsofFactors} considers random Boolean combinations of so-called \emph{patterns}. 
	This is inspired by decompositions of $\LTLf$ formulas into patterns introduced in~\cite{Raha.Roy.ea:2022,Fijalkow.Lagarde:2021,Mascle.Fijalkow.ea:2023}.
	A \emph{pattern} is an \LTLf formula in some normal form using $\lF$ and $\lX$; for instance,
	$\lF(p \land \lX(q \land \lX r))$, where $p$, $q$, and~$r$ are atomic propositions.
	This family is introduced in our paper.
\end{itemize}

\subsection{Experiments} \label{secsub:experiments}

\subparagraph*{Comparison with Scarlet.}
Since Scarlet~\cite{Raha.Roy.ea:2022,Raha.Roy.ea:2024} is a CPU algorithm which improved over its competitors by a large margin, we take it as a reference point for the state of the art. Scarlet is an ``anytime'' algorithm, meaning that it outputs a stream of formulas, each of them solutions for the task but of decreasing size, and it guarantees that the final output is minimal (within the targeted ``directed fragment of $\LTLf$'') upon termination. When reporting on time for Scarlet, we report how long it took to output the last formula, and \emph{not} termination time, which is typically a lot longer.

Below, ``\tool'' refers to our \vfb implementation with \emph{beam search} as a \bs solver, with \LTLBSSwitch $= 8$, \beamwidth $= 100$, and \dcswitch $= 70$.\footnote{%
We have performed experiments on a small training set (containing $20$\% of the data) to determine good hyperparameters; we do not include these experiments here.}
The most critical hyperparameter is \LTLBSSwitch, for which~$8$ appears to achieve the best trade-off between running time and size ratio.
Higher values of \LTLBSSwitch
lead to more timeouts: more time is spent enumerating small \LTLf formulas, but less time is spent exploring large formulas with a \bs solver.
With smaller values, about the same number of benchmarks are solved, but
the \bs phase has access to fewer \LTLf formulas, leading to longer solves and formulas of much larger sizes.

\begin{figure*}[ht]
	\centering
	\includegraphics[width=0.99\textwidth]{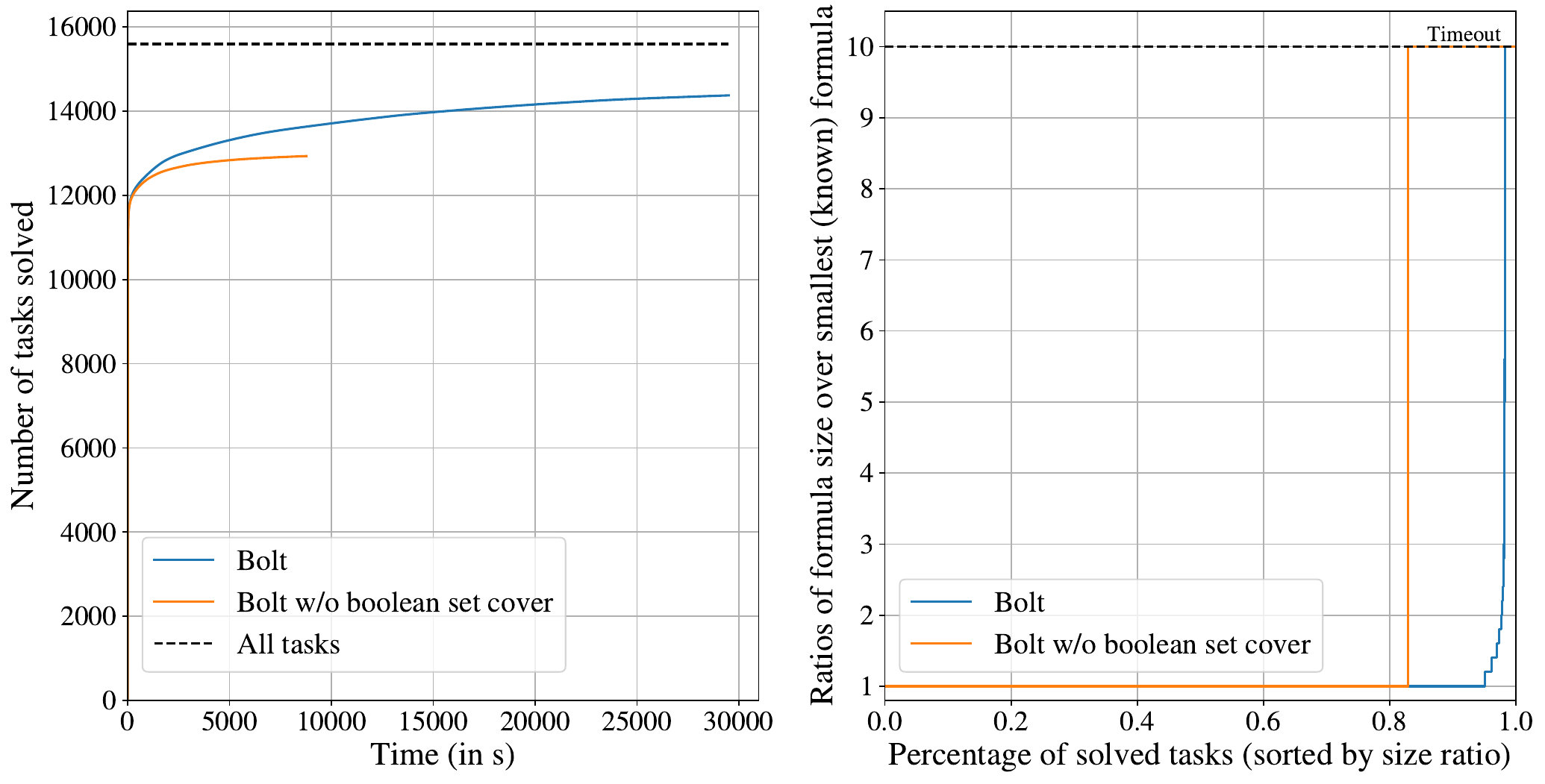}
	\caption{Ablation study: removing \bs.}
	\label{fig:ablation_study}
\end{figure*}

When comparing formula size, we use \emph{size ratio}, which is the ratio between the size of the returned formula and the shortest formula found by all our runs using Scarlet, Bolt, and the GPU algorithm (with a few combinations of hyperparameters).
We use size ratio instead of formula size to have a uniform measure over all tasks, as different tasks may admit minimal formulas of widely different sizes.
No algorithm uses randomization and we observe negligible performance shifts on runs over the same tasks.

\tool has been implemented in Rust using standard libraries (in particular, no parallel architectures).
The implementation is available at \boltLink.
Our implementation includes two other \bs solvers, but we do not report on them here as they do not yield better results than the beam-search algorithm we described.
The experiments were conducted on identical nodes of the \emph{Grid'5000} cluster, running Debian GNU/Linux~5.10.0-34-amd64.
The hardware configuration includes an Intel(R) Xeon(R) Gold~5320~CPU~@~2.20GHz and 384GB of RAM.
\tool is implemented in the Rust programming language. Our code was compiled in \texttt{release} mode, using the \texttt{rustc} Rust compiler, version 1.87.0 (\texttt{17067e9ac}, 2025-05-09).

We compare \tool against Scarlet~\cite{Raha.Roy.ea:2022} in Table~\ref{table:boltVSscarlet} and Figure~\ref{fig:comparison_plot}.
In Table~\ref{table:boltVSscarlet}, we see that, except for the \emph{Subset} family (where one could argue that the formulas used to generate traces correspond to the fragment of formulas manipulated by Scarlet) and the small \emph{Nim} family, \tool performs better on all benchmarks families: it returns a solution for more tasks, with a better average running time and smaller average size ratio.
Figure~\ref{fig:comparison_plot} illustrates that \tool returns solutions more than $100$x faster over $70$\% of the benchmarks, with formulas of smaller or equal size in $98$\% of the cases (most of the formulas in the remaining $2$\% are in the \emph{Subset} family).

\subparagraph*{Comparison with the GPU algorithm.}
In 2024, article~\cite{Valizadeh.Fijalkow.ea:2024} targeted a new category: GPU-based tools, meaning that the implementation specifically takes advantage of the specificities of GPUs (mainly, massively parallel computations). 

We provide a brief comparison with the GPU-based algorithm of~\cite{Valizadeh.Fijalkow.ea:2024} in Table~\ref{table:boltVSGPU}.
A few details make this comparison difficult: $(i)$~the GPU implementation is obviously highly parallelized, which is not the case for our CPU implementation \tool; $(ii)$~the current version of the GPU algorithm has technical limitations, such as not being able to handle tasks where the trace length is $\ge 64$, which prevents us from running it on $4557$ of our new benchmark tasks; $(iii)$~the GPU algorithm always requires about $0.75$~s to start, which means that \tool is much faster on small tasks (such as the \emph{Fixed formulas} benchmarks); $(iv)$~the set of $\LTLf$ operators used by the GPU implementation is smaller than the one we use, which means that comparing the size of formulas makes little sense. 

\subparagraph*{Ablation Study.}
To evaluate the impact of \bs, we compare \tool with the raw \vfb (i.e., \tool without \bs) in Figure~\ref{fig:ablation_study}.
The raw \vfb solves fewer tasks, which shows that switching to \bs helps solve more hard tasks, while sacrificing little w.r.t.\ the size of the formulas.
Observe in the right plot that the raw \vfb either finds a minimal formula or times out, as expected.

\section{Perspectives} \label{sec:perspectives}

The first contribution of this paper is a framework for combining $\LTLf$ learning with \bs.
We proposed a new algorithm for \bs and, through experimental analyses, we showed that our new tool \tool greatly improves over the state of the art for CPU algorithms.

These results yield evidence toward our main thesis: \BoolSynth is a fundamental problem of independent interest, which has a lot of potential applications beyond $\LTLf$ learning. Our framework is generic; for instance, our tool (through the \vfb) can be adapted to any kind of temporal operators, and \BoolSynth can be used for any logic or specification language with disjunctions and conjunctions. Natural candidates include regular expressions~\cite{Valizadeh.Berger:2023}, Boolean circuits~\cite{Chowdhury.Romanelli:2024}, and Bit-vector programs~\cite{Ding.Qiu:2024}.

Our paper provides a first step toward efficiently and approximately solving \BoolSynth. We believe there are algorithms that achieve better trade-offs between compute time and size of learned formulas.
In particular, we expect our algorithms to be highly parallelizable; we leave for future work an implementation of our framework for GPUs and a comparison with the CPU implementation.
We also contributed a large consolidated benchmark suite, which is much needed to push further the theory and practice of $\LTLf$ learning and \BoolSynth.

\bibliography{references}

\begin{thebibliography}{10}

\bibitem{Ammons.Bodk.ea:2002}
Glenn Ammons, Rastislav Bod\'{\i}k, and James~R. Larus.
\newblock Mining specifications.
\newblock In {\em ACM SIGPLAN-SIGACT Symposium on Principles of Programming Languages}, POPL'02, pages 4--16. Association for Computing Machinery, 2002.
\newblock \href {https://doi.org/10.1145/503272.503275} {\path{doi:10.1145/503272.503275}}.

\bibitem{Angluin_1988}
Dana Angluin.
\newblock Queries and concept learning.
\newblock {\em Machine Learning}, 2(4):319--342, 1988.
\newblock \href {https://doi.org/10.1023/a:1022821128753} {\path{doi:10.1023/a:1022821128753}}.

\bibitem{anthony1997computational}
M.~Anthony and N.~Biggs.
\newblock {\em Computational Learning Theory}.
\newblock Cambridge Tracts in Theoretical Computer Science. Cambridge University Press, 1997.

\bibitem{Asarin.Donze.ea:2012}
Eugene Asarin, Alexandre Donz{\'e}, Oded Maler, and Dejan Nickovic.
\newblock Parametric identification of temporal properties.
\newblock In Sarfraz Khurshid and Koushik Sen, editors, {\em Runtime Verification}, pages 147--160. Springer Berlin Heidelberg, 2012.
\newblock \href {https://doi.org/10.1007/978-3-642-29860-8_12} {\path{doi:10.1007/978-3-642-29860-8_12}}.

\bibitem{Bartocci.Bortolussi.ea:2015}
Ezio Bartocci, Luca Bortolussi, Laura Nenzi, and Guido Sanguinetti.
\newblock System design of stochastic models using robustness of temporal properties.
\newblock {\em Theoretical Computer Science}, 587:3--25, 2015.
\newblock \href {https://doi.org/10.1016/J.TCS.2015.02.046} {\path{doi:10.1016/J.TCS.2015.02.046}}.

\bibitem{Bartocci.Bortolussi.ea:2014}
Ezio Bartocci, Luca Bortolussi, and Guido Sanguinetti.
\newblock Data-driven statistical learning of temporal logic properties.
\newblock In Axel Legay and Marius Bozga, editors, {\em International Conference on Formal Modeling and Analysis of Timed Systems}, volume 8711 of {\em Lecture Notes in Computer Science}, pages 23--37. Springer, 2014.
\newblock \href {https://doi.org/10.1007/978-3-319-10512-3_3} {\path{doi:10.1007/978-3-319-10512-3_3}}.

\bibitem{conferenceVersion}
Gabriel Bathie, Nathana{\"{e}}l Fijalkow, Th{\'{e}}o Matricon, Baptiste Mouillon, and Pierre Vandenhove.
\newblock {LTL}\(_f\) learning meets boolean set cover.
\newblock In {\em International Conference on Tools and Algorithms for the Construction and Analysis of Systems, {TACAS} 2026}, Lecture Notes in Computer Science. Springer, 2026.

\bibitem{Bombara.Vasile.ea:2016}
Giuseppe Bombara, Cristian~Ioan Vasile, Francisco Penedo~Alvarez, Hirotoshi Yasuoka, and Calin Belta.
\newblock A decision tree approach to data classification using signal temporal logic.
\newblock In {\em Hybrid Systems: Computation and Control}, HSCC'16, 2016.
\newblock \href {https://doi.org/10.1145/2883817.2883843} {\path{doi:10.1145/2883817.2883843}}.

\bibitem{Borassi.Crescenzi.ea:2015}
Michele Borassi, Pierluigi Crescenzi, and Michel Habib.
\newblock Into the square: On the complexity of some quadratic-time solvable problems.
\newblock In Pierluigi Crescenzi and Michele Loreti, editors, {\em Proceedings of the 16th Italian Conference on Theoretical Computer Science, {ICTCS} 2015}, volume 322 of {\em Electronic Notes in Theoretical Computer Science}, pages 51--67. Elsevier, 2015.
\newblock \href {https://doi.org/10.1016/J.ENTCS.2016.03.005} {\path{doi:10.1016/J.ENTCS.2016.03.005}}.

\bibitem{Brafman.Giacomo.ea:2018}
Ronen~I. Brafman, Giuseppe~De Giacomo, and Fabio Patrizi.
\newblock {LTLf}/{LDLf} non-{M}arkovian rewards.
\newblock In Sheila~A. McIlraith and Kilian~Q. Weinberger, editors, {\em {AAAI} Conference on Artificial Intelligence}, pages 1771--1778. {AAAI} Press, 2018.
\newblock \href {https://doi.org/10.1609/AAAI.V32I1.11572} {\path{doi:10.1609/AAAI.V32I1.11572}}.

\bibitem{Bufo.Bartocci.ea:2014}
Sara Bufo, Ezio Bartocci, Guido Sanguinetti, Massimo Borelli, Umberto Lucangelo, and Luca Bortolussi.
\newblock Temporal logic based monitoring of assisted ventilation in intensive care patients.
\newblock In Tiziana Margaria and Bernhard Steffen, editors, {\em International Symposium on Leveraging Applications of Formal Methods, Verification and Validation. Specialized Techniques and Applications}, volume 8803 of {\em Lecture Notes in Computer Science}, pages 391--403. Springer, 2014.
\newblock \href {https://doi.org/10.1007/978-3-662-45231-8_30} {\path{doi:10.1007/978-3-662-45231-8_30}}.

\bibitem{Camacho.Chen.ea:2017}
Alberto Camacho, Oscar Chen, Scott Sanner, and Sheila~A. McIlraith.
\newblock Non-{M}arkovian rewards expressed in {LTL}: {G}uiding search via reward shaping.
\newblock In Alex Fukunaga and Akihiro Kishimoto, editors, {\em International Symposium on Combinatorial Search}, pages 159--160. {AAAI} Press, 2017.
\newblock \href {https://doi.org/10.1609/SOCS.V8I1.18421} {\path{doi:10.1609/SOCS.V8I1.18421}}.

\bibitem{Camacho.Icarte.ea:2019}
Alberto Camacho, Rodrigo~Toro Icarte, Toryn~Q. Klassen, Richard~Anthony Valenzano, and Sheila~A. McIlraith.
\newblock {LTL} and beyond: {F}ormal languages for reward function specification in reinforcement learning.
\newblock In Sarit Kraus, editor, {\em International Joint Conference on Artificial Intelligence}, pages 6065--6073. ijcai.org, 2019.
\newblock \href {https://doi.org/10.24963/IJCAI.2019/840} {\path{doi:10.24963/IJCAI.2019/840}}.

\bibitem{Camacho.McIlraith:2019}
Alberto Camacho and Sheila~A. McIlraith.
\newblock Learning interpretable models expressed in linear temporal logic.
\newblock {\em International Conference on Automated Planning and Scheduling}, 29, 2019.
\newblock \href {https://doi.org/10.1609/icaps.v29i1.3529} {\path{doi:10.1609/icaps.v29i1.3529}}.

\bibitem{Caplain:1975}
Michel Caplain.
\newblock Finding invariant assertions for proving programs.
\newblock In Martin~L. Shooman and Raymond~T. Yeh, editors, {\em International Conference on Reliable Software 1975}, pages 165--171. {ACM}, 1975.
\newblock \href {https://doi.org/10.1145/800027.808436} {\path{doi:10.1145/800027.808436}}.

\bibitem{Chou.Ozay.ea:2020}
Glen Chou, Necmiye Ozay, and Dmitry Berenson.
\newblock Explaining multi-stage tasks by learning temporal logic formulas from suboptimal demonstrations.
\newblock In Marc Toussaint, Antonio Bicchi, and Tucker Hermans, editors, {\em Robotics: Science and Systems XVI}, 2020.
\newblock \href {https://doi.org/10.15607/RSS.2020.XVI.097} {\path{doi:10.15607/RSS.2020.XVI.097}}.

\bibitem{Chowdhury.Romanelli:2024}
Animesh~Basak Chowdhury, Marco Romanelli, Benjamin Tan, Ramesh Karri, and Siddharth Garg.
\newblock Retrieval-guided reinforcement learning for {B}oolean circuit minimization.
\newblock In {\em International Conference on Learning Representations, {ICLR}}. OpenReview.net, 2024.

\bibitem{Christodorescu.Jha.ea:2007}
Mihai Christodorescu, Somesh Jha, and Christopher Kruegel.
\newblock Mining specifications of malicious behavior.
\newblock In Ivica Crnkovic and Antonia Bertolino, editors, {\em Joint meeting of the European Software Engineering Conference and the {ACM} {SIGSOFT} International Symposium on Foundations of Software Engineering}, pages 5--14. {ACM}, 2007.
\newblock \href {https://doi.org/10.1145/1287624.1287628} {\path{doi:10.1145/1287624.1287628}}.

\bibitem{Crama_1988}
Yves Crama, Peter~L. Hammer, and Toshihide Ibaraki.
\newblock Cause-effect relationships and partially defined {B}oolean functions.
\newblock {\em Annals of Operations Research}, 16(1):299--325, 1988.
\newblock \href {https://doi.org/10.1007/bf02283750} {\path{doi:10.1007/bf02283750}}.

\bibitem{10.1007/3-540-48683-6_23}
Marco Daniele, Fausto Giunchiglia, and Moshe~Y. Vardi.
\newblock Improved automata generation for linear temporal logic.
\newblock In {\em Computer Aided Verification}, pages 249--260, 1999.
\newblock \href {https://doi.org/10.1007/3-540-48683-6_23} {\path{doi:10.1007/3-540-48683-6_23}}.

\bibitem{Ding.Qiu:2024}
Yuantian Ding and Xiaokang Qiu.
\newblock Enhanced enumeration techniques for syntax-guided synthesis of bit-vector manipulations.
\newblock {\em Proceedings of the {ACM} on Programming Languages}, 8({POPL}):2129--2159, 2024.
\newblock \href {https://doi.org/10.1145/3632913} {\path{doi:10.1145/3632913}}.

\bibitem{Dureja.Rozier:2018}
Rohit Dureja and Kristin~Yvonne Rozier.
\newblock More scalable {LTL} model checking via discovering design-space dependencies (\({D}^3\)).
\newblock In {\em International Conference on Tools and Algorithms for the Construction and Analysis of Systems, {TACAS}}, volume 10805 of {\em Lecture Notes in Computer Science}, pages 309--327. Springer, 2018.
\newblock \href {https://doi.org/10.1007/978-3-319-89960-2_17} {\path{doi:10.1007/978-3-319-89960-2_17}}.

\bibitem{Spot}
Alexandre Duret{-}Lutz, Etienne Renault, Maximilien Colange, Florian Renkin, Alexandre~Gbaguidi Aisse, Philipp Schlehuber{-}Caissier, Thomas Medioni, Antoine Martin, J{\'{e}}r{\^{o}}me Dubois, Cl{\'{e}}ment Gillard, and Henrich Lauko.
\newblock From {S}pot 2.0 to {S}pot 2.10: {W}hat's new?
\newblock In Sharon Shoham and Yakir Vizel, editors, {\em Computer Aided Verification -- 34th International Conference, {CAV} 2022, Proceedings, Part {II}}, volume 13372 of {\em Lecture Notes in Computer Science}, pages 174--187. Springer, 2022.
\newblock \href {https://doi.org/10.1007/978-3-031-13188-2_9} {\path{doi:10.1007/978-3-031-13188-2_9}}.

\bibitem{Dwyer.Avrunin.ea:1999}
Matthew~B. Dwyer, George~S. Avrunin, and James~C. Corbett.
\newblock Patterns in property specifications for finite-state verification.
\newblock In {\em International Conference on Software Engineering, {ICSE}}, pages 411--420. {ACM}, 1999.
\newblock \href {https://doi.org/10.1145/302405.302672} {\path{doi:10.1145/302405.302672}}.

\bibitem{Engler.Chen.ea:2001}
Dawson~R. Engler, David~Yu Chen, and Andy Chou.
\newblock Bugs as deviant behavior: {A} general approach to inferring errors in systems code.
\newblock In Keith Marzullo and Mahadev Satyanarayanan, editors, {\em {ACM} Symposium on Operating System Principles}, pages 57--72. {ACM}, 2001.
\newblock \href {https://doi.org/10.1145/502034.502041} {\path{doi:10.1145/502034.502041}}.

\bibitem{Fijalkow.Lagarde:2021}
Nathana{\"{e}}l Fijalkow and Guillaume Lagarde.
\newblock The complexity of learning linear temporal formulas from examples.
\newblock In {\em International Conference on Grammatical Inference}, volume 153 of {\em ICGI'21}, pages 237--250. Proceedings of Machine Learning Research, 2021.

\bibitem{Gabbrielli.Levi.ea:1992}
Maurizio Gabbrielli, Giorgio Levi, and Maria~Chiara Meo.
\newblock Observational equivalences for logic programs.
\newblock In {\em Joint International Conference and Symposium on Logic Programming}, pages 131--145, 1992.

\bibitem{Gabel.Su:2008}
Mark Gabel and Zhendong Su.
\newblock Javert: {F}ully automatic mining of general temporal properties from dynamic traces.
\newblock In {\em ACM SIGSOFT International Symposium on Foundations of Software Engineering}, FSE'08, pages 339--349, New York, NY, USA, 2008. Association for Computing Machinery.
\newblock \href {https://doi.org/10.1145/1453101.1453150} {\path{doi:10.1145/1453101.1453150}}.

\bibitem{Ghosh.Elenius.ea:2016}
Shalini Ghosh, Daniel Elenius, Wenchao Li, Patrick Lincoln, Natarajan Shankar, and Wilfried Steiner.
\newblock {ARSENAL}: Automatic requirements specification extraction from natural language.
\newblock In Sanjai Rayadurgam and Oksana Tkachuk, editors, {\em International Symposium on {NASA} Formal Methods}, volume 9690 of {\em Lecture Notes in Computer Science}, pages 41--46. Springer, 2016.
\newblock \href {https://doi.org/10.1007/978-3-319-40648-0_4} {\path{doi:10.1007/978-3-319-40648-0_4}}.

\bibitem{Giacomo.Vardi:2013}
Giuseppe~De Giacomo and Moshe~Y. Vardi.
\newblock Linear temporal logic and linear dynamic logic on finite traces.
\newblock In Francesca Rossi, editor, {\em International Joint Conference on Artificial Intelligence}, IJCAI'13, pages 854--860. {IJCAI/AAAI}, 2013.

\bibitem{Giannakopoulou.Pressburger.ea:2020}
Dimitra Giannakopoulou, Thomas Pressburger, Anastasia Mavridou, Julian Rhein, Johann Schumann, and Nija Shi.
\newblock Formal requirements elicitation with {FRET}.
\newblock In {\em International Conference on Requirements Engineering: Foundation for Software Quality}, volume 2584 of {\em {CEUR} Workshop Proceedings}. CEUR-WS.org, 2020.

\bibitem{Guti_rrez_S_nchez_2025}
Pablo Gutiérrez-Sánchez, Diego Pérez-Liébana, and Raluca~D Gaina.
\newblock Explaining and clustering playtraces using temporal logics.
\newblock In {\em Proceedings of the 20th International Conference on the Foundations of Digital Games}, FDG'25, pages 1--10. ACM, 2025.
\newblock \href {https://doi.org/10.1145/3723498.3723719} {\path{doi:10.1145/3723498.3723719}}.

\bibitem{Icarte.Klassen.ea:2018}
Rodrigo~Toro Icarte, Toryn~Q. Klassen, Richard~Anthony Valenzano, and Sheila~A. McIlraith.
\newblock Teaching multiple tasks to an {RL} agent using {LTL}.
\newblock In Elisabeth Andr{\'{e}}, Sven Koenig, Mehdi Dastani, and Gita Sukthankar, editors, {\em International Conference on Autonomous Agents and MultiAgent Systems}, pages 452--461. International Foundation for Autonomous Agents and Multiagent Systems Richland, SC, {USA} / {ACM}, 2018.

\bibitem{Icarte.Klassen.ea:2018*1}
Rodrigo~Toro Icarte, Toryn~Q. Klassen, Richard~Anthony Valenzano, and Sheila~A. McIlraith.
\newblock Using reward machines for high-level task specification and decomposition in reinforcement learning.
\newblock In Jennifer~G. Dy and Andreas Krause, editors, {\em International Conference on Machine Learning}, volume~80 of {\em Proceedings of Machine Learning Research}, pages 2112--2121. {PMLR}, 2018.

\bibitem{JacobsPABCCDDDFFKKLMMPR24}
Swen Jacobs, Guillermo~A. P{\'{e}}rez, Remco Abraham, V{\'{e}}ronique Bruy{\`{e}}re, Micha{\"{e}}l Cadilhac, Maximilien Colange, Charly Delfosse, Tom van Dijk, Alexandre Duret{-}Lutz, Peter Faymonville, Bernd Finkbeiner, Ayrat Khalimov, Felix Klein, Michael Luttenberger, Klara~J. Meyer, Thibaud Michaud, Adrien Pommellet, Florian Renkin, Philipp Schlehuber{-}Caissier, Mouhammad Sakr, Salomon Sickert, Ga{\"{e}}tan Staquet, Cl{\'{e}}ment Tamines, Leander Tentrup, and Adam Walker.
\newblock The reactive synthesis competition {(SYNTCOMP):} 2018-2021.
\newblock {\em International Journal on Software Tools for Technology Transfer}, 26(5):551--567, 2024.
\newblock \href {https://doi.org/10.1007/S10009-024-00754-1} {\path{doi:10.1007/S10009-024-00754-1}}.

\bibitem{Jin.Donze.ea:2015}
Xiaoqing Jin, Alexandre Donz{\'{e}}, Jyotirmoy~V. Deshmukh, and Sanjit~A. Seshia.
\newblock Mining requirements from closed-loop control models.
\newblock {\em {IEEE} Transactions on Computer-Aided Design of Integrated Circuits and Systems}, 34(11):1704--1717, 2015.
\newblock \href {https://doi.org/10.1109/TCAD.2015.2421907} {\path{doi:10.1109/TCAD.2015.2421907}}.

\bibitem{Jones.Kong.ea:2014}
Austin Jones, Zhaodan Kong, and Calin Belta.
\newblock Anomaly detection in cyber-physical systems: {A} formal methods approach.
\newblock In {\em {IEEE} Conference on Decision and Control}, pages 848--853. {IEEE}, 2014.
\newblock \href {https://doi.org/10.1109/CDC.2014.7039487} {\path{doi:10.1109/CDC.2014.7039487}}.

\bibitem{Karp:1972}
Richard~M. Karp.
\newblock Reducibility among combinatorial problems.
\newblock In Raymond~E. Miller and James~W. Thatcher, editors, {\em Symposium on the Complexity of Computer Computations}, The {IBM} Research Symposia Series, pages 85--103. Plenum Press, New York, 1972.
\newblock \href {https://doi.org/10.1007/978-1-4684-2001-2_9} {\path{doi:10.1007/978-1-4684-2001-2_9}}.

\bibitem{Kong.Jones.ea:2017}
Zhaodan Kong, Austin Jones, and Calin Belta.
\newblock Temporal logics for learning and detection of anomalous behavior.
\newblock {\em {IEEE} Transactions on Automatic Control}, 62(3):1210--1222, 2017.
\newblock \href {https://doi.org/10.1109/TAC.2016.2585083} {\path{doi:10.1109/TAC.2016.2585083}}.

\bibitem{Lemieux.Beschastnikh:2015}
Caroline Lemieux and Ivan Beschastnikh.
\newblock Investigating program behavior using the {T}exada {LTL} specifications miner.
\newblock In {\em IEEE/ACM International Conference on Automated Software Engineering}, ASE'15, pages 870--875. IEEE Computer Society, 2015.
\newblock \href {https://doi.org/10.1109/ASE.2015.94} {\path{doi:10.1109/ASE.2015.94}}.

\bibitem{Lemieux.Park.ea:2015}
Caroline Lemieux, Dennis Park, and Ivan Beschastnikh.
\newblock General {LTL} specification mining.
\newblock In {\em IEEE/ACM International Conference on Automated Software Engineering}, ASE'15, pages 81--92. IEEE Computer Society, 2015.
\newblock \href {https://doi.org/10.1109/ASE.2015.71} {\path{doi:10.1109/ASE.2015.71}}.

\bibitem{Li:2013}
Wenchao Li.
\newblock {\em Specification Mining: New Formalisms, Algorithms and Applications}.
\newblock PhD thesis, University of California, Berkeley, {USA}, 2013.

\bibitem{Li.Zhou:2005}
Zhenmin Li and Yuanyuan Zhou.
\newblock {PR-Miner}: automatically extracting implicit programming rules and detecting violations in large software code.
\newblock In Michel Wermelinger and Harald~C. Gall, editors, {\em European Software Engineering Conference held jointly with 13th {ACM} {SIGSOFT} International Symposium on Foundations of Software Engineering}, pages 306--315. {ACM}, 2005.
\newblock \href {https://doi.org/10.1145/1081706.1081755} {\path{doi:10.1145/1081706.1081755}}.

\bibitem{Lo.Khoo.ea:2017}
David Lo, Siau-Cheng Khoo, Jiawei Han, and Chao Liu.
\newblock {\em Mining Software Specifications: Methodologies and Applications}.
\newblock CRC Press, Inc., USA, 1st edition, 2017.
\newblock \href {https://doi.org/10.1201/b10928} {\path{doi:10.1201/b10928}}.

\bibitem{Lyu.Li.ea:2024}
Mengtao Lyu, Fan Li, Ching-Hung Lee, and Chun-Hsien Chen.
\newblock {VALIO}: Visual attention-based linear temporal logic method for explainable out-of-the-loop identification.
\newblock {\em Knowledge-Based Systems}, 299:112086, 2024.
\newblock \href {https://doi.org/10.1016/j.knosys.2024.112086} {\path{doi:10.1016/j.knosys.2024.112086}}.

\bibitem{Mascle.Fijalkow.ea:2023}
Corto Mascle, Nathanaël Fijalkow, and Guillaume Lagarde.
\newblock Learning temporal formulas from examples is hard.
\newblock {\em CoRR}, abs/2312.16336, 2023.
\newblock \href {https://arxiv.org/abs/2312.16336} {\path{arXiv:2312.16336}}, \href {https://doi.org/10.48550/arXiv.2312.16336} {\path{doi:10.48550/arXiv.2312.16336}}.

\bibitem{Murphy.Holzer.ea:2024}
William Murphy, Nikolaus Holzer, Nathan Koenig, Leyi Cui, Raven Rothkopf, Feitong Qiao, and Mark Santolucito.
\newblock Guiding {LLM} temporal logic generation with explicit separation of data and control.
\newblock {\em CoRR}, abs/2406.07400, 2024.
\newblock \href {https://arxiv.org/abs/2406.07400} {\path{arXiv:2406.07400}}, \href {https://doi.org/10.48550/arXiv.2406.07400} {\path{doi:10.48550/arXiv.2406.07400}}.

\bibitem{Neider.Gavran:2018}
Daniel Neider and Ivan Gavran.
\newblock Learning linear temporal properties.
\newblock In {\em Formal Methods in Computer Aided Design}, FMCAD'18, pages 1--10, 2018.
\newblock \href {https://doi.org/10.23919/FMCAD.2018.8603016} {\path{doi:10.23919/FMCAD.2018.8603016}}.

\bibitem{Pnueli:1977}
Amir Pnueli.
\newblock The temporal logic of programs.
\newblock In {\em Symposium on Foundations of Computer Science}, SFCS'77, 1977.
\newblock \href {https://doi.org/10.1109/SFCS.1977.32} {\path{doi:10.1109/SFCS.1977.32}}.

\bibitem{Raha.Roy.ea:2022}
Ritam Raha, Rajarshi Roy, Nathana{\"{e}}l Fijalkow, and Daniel Neider.
\newblock Scalable anytime algorithms for learning fragments of linear temporal logic.
\newblock In {\em International Conference on Tools and Algorithms for the Construction and Analysis of Systems}, volume 13243 of {\em TACAS'22}, pages 263--280. Springer, 2022.
\newblock \href {https://doi.org/10.1007/978-3-030-99524-9_14} {\path{doi:10.1007/978-3-030-99524-9_14}}.

\bibitem{Raha.Roy.ea:2024}
Ritam Raha, Rajarshi Roy, Nathanaël Fijalkow, and Daniel Neider.
\newblock Scarlet: Scalable anytime algorithms for learning fragments of linear temporal logic.
\newblock {\em Journal of Open Source Software}, 2024.
\newblock \href {https://doi.org/10.21105/joss.05052} {\path{doi:10.21105/joss.05052}}.

\bibitem{Roy2025}
Rajarshi Roy, Yash Pote, David Parker, and Marta Kwiatkowska.
\newblock Learning probabilistic temporal logic specifications for stochastic systems.
\newblock \href {https://arxiv.org/abs/2505.12107v1} {\path{arXiv:2505.12107v1}}.

\bibitem{Rozier:2016}
Kristin~Yvonne Rozier.
\newblock Specification: {T}he biggest bottleneck in formal methods and autonomy.
\newblock In Sandrine Blazy and Marsha Chechik, editors, {\em International Conference on Verified Software. Theories, Tools, and Experiments}, volume 9971 of {\em Lecture Notes in Computer Science}, pages 8--26, 2016.
\newblock \href {https://doi.org/10.1007/978-3-319-48869-1_2} {\path{doi:10.1007/978-3-319-48869-1_2}}.

\bibitem{Valiant_1984}
L.~G. Valiant.
\newblock A theory of the learnable.
\newblock {\em Communications of the ACM}, 27(11):1134--1142, 1984.
\newblock \href {https://doi.org/10.1145/1968.1972} {\path{doi:10.1145/1968.1972}}.

\bibitem{Valizadeh.Berger:2023}
Mojtaba Valizadeh and Martin Berger.
\newblock Search-based regular expression inference on a {GPU}.
\newblock {\em Proceedings of the {ACM} on Programming Languages}, 7({PLDI}):1317--1339, 2023.
\newblock \href {https://doi.org/10.1145/3591274} {\path{doi:10.1145/3591274}}.

\bibitem{Valizadeh.Fijalkow.ea:2024}
Mojtaba Valizadeh, Nathana{\"{e}}l Fijalkow, and Martin Berger.
\newblock {LTL} learning on {GPU}s.
\newblock In {\em International Conference on Computer Aided Verification, {CAV}}, volume 14683 of {\em Lecture Notes in Computer Science}, pages 209--231. Springer, 2024.
\newblock \href {https://doi.org/10.1007/978-3-031-65633-0_10} {\path{doi:10.1007/978-3-031-65633-0_10}}.

\bibitem{Wegbreit:1974}
Ben Wegbreit.
\newblock The synthesis of loop predicates.
\newblock {\em Communications of the {ACM}}, 17(2):102--112, 1974.
\newblock \href {https://doi.org/10.1145/360827.360850} {\path{doi:10.1145/360827.360850}}.

\bibitem{Williams:2005}
Ryan Williams.
\newblock A new algorithm for optimal 2-constraint satisfaction and its implications.
\newblock {\em Theoretical Computer Science}, 348(2-3):357--365, 2005.
\newblock \href {https://doi.org/10.1016/j.tcs.2005.09.023} {\path{doi:10.1016/j.tcs.2005.09.023}}.

\bibitem{Yang.Hoxha.ea:2012}
Hengyi Yang, Bardh Hoxha, and Georgios~E. Fainekos.
\newblock Querying parametric temporal logic properties on embedded systems.
\newblock In Brian Nielsen and Carsten Weise, editors, {\em {IFIP} International Conference on Testing Software and Systems}, volume 7641 of {\em Lecture Notes in Computer Science}, pages 136--151. Springer, 2012.
\newblock \href {https://doi.org/10.1007/978-3-642-34691-0_11} {\path{doi:10.1007/978-3-642-34691-0_11}}.

\bibitem{Yang.Evans.ea:2006}
Jinlin Yang, David Evans, Deepali Bhardwaj, Thirumalesh Bhat, and Manuvir Das.
\newblock {Perracotta}: {M}ining temporal {API} rules from imperfect traces.
\newblock In {\em International Conference on Software Engineering}, ICSE'06, pages 282--291. Association for Computing Machinery, 2006.
\newblock \href {https://doi.org/10.1145/1134285.1134325} {\path{doi:10.1145/1134285.1134325}}.

\bibitem{Zeller:2010}
Andreas Zeller.
\newblock Mining specifications: {A} roadmap.
\newblock In Sebastian Nanz, editor, {\em The Future of Software Engineering}, pages 173--182. Springer, 2010.
\newblock \href {https://doi.org/10.1007/978-3-642-15187-3_13} {\path{doi:10.1007/978-3-642-15187-3_13}}.

\end{thebibliography}

\newpage
\appendix
\section{Pseudocode for the \vfb (Section~\ref{sec:ltl_learning})} \label{app:pseudocodesLTL}

For completeness, we show in Algorithm~\ref{algo:vanilla_combinatorial_search} the complete pseudocode for the \vfb (Section~\ref{secsub:ltl_implem}).
The way we enumerate all formulas of size $k+1$ is as follows: for a unary operator, we take all formulas of size $k$ in order to produce formulas of size $k+1$ and, for a binary operator, we iterate over all pairs of formulas whose sizes sum to $k$: $(1, k-1), (2, k-2), \ldots$

\begin{algorithm}[ht]
	\caption{\vfb}\label{algo:vanilla_combinatorial_search}
	\begin{algorithmic}[1]
		\Procedure{VFB}{$\AP, O_1, O_2, \CT$}
		\Comment{$O_n$ is the set of $n$-ary operators, $\CT$ takes a formula and computes its characteristic table}
		\State $\family_1 \gets \AP$
		\State $M \gets \left\{ \CT(a) \mid a \in \AP \right\}$ \Comment{Compute initial characteristic tables}
		\State $k \gets 1$
		\While{True}
		\State $\family_{k+1} \gets \emptyset$

		\For{$\op \in O_1$}
		\For{$\phi \in \family_{k}$}
		\State $\phi' \gets \op(\phi)$ \Comment{Size $k+1$}
		\If{$\phi'$ is a solution}
		\State \textbf{return} $\phi'$
		\ElsIf{there is no $\psi$ such that $\psi \sim \phi'$}
		\State $\family_{k+1} \gets \family_{k+1} \cup \{ \phi'\}$
		\State $M \gets M \cup \{ \CT(\phi')\}$
		\EndIf
		\EndFor
		\EndFor
		\For{$\op \in O_2$}
		\For{$i, j \geq 1, i + j = k$}
		\For{$\phi_1 \in \family_{i}, \phi_2 \in \family_{j}$}
		\State $\phi' \gets \op(\phi_1, \phi_2)$ \Comment{Size $k+1$}
		\If{$\phi'$ is a solution}
		\State \textbf{return} $\phi'$
		\ElsIf{there is no $\psi$ s.t.\ $\psi \sim \phi'$}
		\State $\family_{k+1} \gets \family_{k+1} \cup \{ \phi'\}$
		\State $M \gets M \cup \{ \CT(\phi')\}$
		\EndIf
		\EndFor
		\EndFor
		\EndFor
		\State $k \gets k + 1$ \Comment{No solution of size $k+1$}
		\EndWhile
		\EndProcedure
	\end{algorithmic}
\end{algorithm}

As discussed in Section~\ref{secsub:ltl_implem}, we show how to compute inductively the characteristic sequences of $\LTLf$ formulas on traces with a few bitwise operations:
\begin{enumerate}
	\item $\lX\phi$ is just a (left) shift by $1$ of the characteristic sequences of $\phi$, padding it with a $0$.
	\item $\lnot \phi$ is the bitwise negation of the characteristic sequences.
	\item $\phi \land \psi$ is the bitwise ``and'' of the characteristic sequences of the two formulas; the same can be done for $\phi \lor \psi$ with ``or''.
	\item $\lF \phi$ can be seen as $\phi \lor \lX \phi \lor \lX^2 \phi\lor \ldots{}$
	Since we work with finite traces, this converges to a fixed point in finite time.
	We can actually compute this efficiently with a \emph{logarithmic} number of shifts in the trace length: simply do shifts of powers of $2$ up to the length of the trace.
	For example, with the sequence $0000 0001 0000 0001$, first shift by $1$ and do an ``or'': it gives us $0000 0011 0000 0011$.
	Shift by $2$ and do an ``or'', we get: $0000 1111 0000 1111$.
	Now we shift by $4$ and do an ``or'', giving us only $1$'s.
	If instead we have $0000 0000 0000 0001$, we also need to shift by~$8$.
	The example shows that since $1$'s propagate locally, we only need to do this operation $\log \ell$ times, where $\ell$ is the trace length.
	\item $\lG \phi$ is equivalent to $\lnot \lF \lnot \phi$.
	\item $\phi \lU \psi$ can be done similarly to $\lF \phi$ since it is $\psi \lor (\phi \land \lX \psi) \lor (\phi \land \lX(\phi \land \lX \psi)) \lor \ldots{}$ Intuitively, first compute $\phi \land \lX \psi$ for all positions; it suffices to do an ``and'' on a shifted~$\psi$.
	Now, following the same protocol as $\lF \phi$, it suffices to do the same shifts and ``or'', but with the addition that before the ``or'', as $\phi$ needs to be true at all positions before, we must do an ``and'' with the shifted~$\phi$.
\end{enumerate}

We include Python code for the fast inductive evaluation of $\LTLf$ formulas, based on bitwise operations.
These algorithms describe the general idea of how to compute iteratively the characteristic sequences of formulas from existing ones.

\begin{minted}{python}
def X(phi):
  return phi << 1
\end{minted}

\begin{minted}{python}
def logic_not(phi):
  return ~phi
\end{minted}

\begin{minted}{python}
def logic_and(phi, psi):
  return phi & psi
\end{minted}

\begin{minted}{python}
def F(phi):
  out = phi
  # Assume trace lengths <= 64
  for shift in [1, 2, 4, 8, 16, 32]:
    out |= (out << shift)
  return out
\end{minted}

\begin{minted}{python}
def U(phi, psi):
  out = psi
  # Assume trace lengths <= 64
  for shift in [1, 2, 4, 8, 16, 32]:
    out |= ((out << shift) & phi)
    phi = (phi << shift)
  return out
\end{minted}

\end{document}